\let\proof\@undefined                        % undefine \proof
\let\endproof\@undefined                  % undefine \endproof
\tikzset{
    %Define standard arrow tip
    >=stealth',
    %Define style for boxes
    punkt/.style={
           rectangle,
           rounded corners,
           draw=black, very thick,
           minimum height=2em,
           text centered},
    % Define arrow style
    pil/.style={
           ->,
           thick,
           shorten <=2pt,
           shorten >=2pt,}
}
\renewcommand{\todo}[2][]{\tikzexternaldisable\@todo[#1]{#2}\tikzexternalenable}
\newcounter{mycomment} % Usage:  \mycomment[CR]{Schreib was gscheits}
\newcommand{\sw}[1]{\begin{sideways}#1\end{sideways}}
\newcolumntype{d}[1]{D{.}{.}{#1} }
\newlength{\luw}
\newlength{\luh}
\DeclareMathAlphabet{\mathcalligra}{T1}{calligra}{m}{n}
\DeclareMathAlphabet{\mathantt}{OT1}{antt}{li}{it}
\DeclareMathAlphabet{\mathpzc}{OT1}{pzc}{m}{it}
\newcommand{\argmax}{\mathop{\rm argmax}}
\newcommand{\argmin}{\mathop{\rm argmin}}
\DeclareMathOperator{\dom}{dom}
\newcommand{\<}{\langle}
\renewcommand{\>}{\rangle}
\renewcommand{\mid}{\:|\,}
\providecommand{\e}[1]{\ensuremath{\times 10^{#1}}}
\newcommand{\Real}{\mathbb{R}}
\newcommand{\tab}{{\hphantom{bla}}}
\newcommand{\V}{\mathcal{V}}
\newcommand{\E}{\mathcal{E}}
\newcommand{\X}{\mathcal{X}}
\renewcommand{\L}{\mathcal{L}}
\newcounter{myRomanCounter}
\newcommand{\w}{{w}}
\newcommand{\gray}{\color[rgb]{0.5,0.5,0.5}}
\newcommand{\red}{\color[rgb]{1,0,0}}
\renewcommand*{\paragraph}[1]{\par\noindent{\normalsize\bf #1}\,\xspace}
\def\mathrlap{\mathpalette\mathrlapinternal}
\def\mathllap{\mathpalette\mathllapinternal}
\def\mathllapinternal#1#2{\llap{$\mathsurround=0pt#1{#2}$}}
\def\mathrlapinternal#1#2{\rlap{$\mathsurround=0pt#1{#2}$}}
\def\leftbb{\mathrlap{[}\hskip1.3pt[}
\def\rightbb{]\hskip1.36pt\mathllap{]}}
\def\epsilon{\varepsilon}
\let\parSym\S
\def\S{\mathcal{S}}
\newcommand{\revisit}[1][]{%
\ifthenelse{\equal{#1}{}}{% no argument
\ensuremath{\red \triangle}\xspace}{%
{\ensuremath{\red \rhd}\xspace}%
{\gray #1}%
{\ensuremath{\red \lhd}\xspace}%
}%
}
\def\anchor [#1]#2{%
\phantomsection{}#1\label{#2}%
\def\arga{#2}%
\global\expandafter\def\csname#2\endcsname{%
\hyperref[#2]{#1}\xspace%
}%
}%
\def\codefunction [#1]#2{%
\phantomsection{}\label{#2}{\ttfamily #1\xspace}%
\def\arga{#2}%
\global\expandafter\def\csname#2\endcsname{%
\hyperref[#2]{\ttfamily #1}\xspace%
}%
}
\newcolumntype{L}[1]{>{\raggedright\let\newline\\\arraybackslash\hspace{0pt}}m{#1}}
\newcolumntype{C}[1]{>{\centering\let\newline\\\arraybackslash\hspace{0pt}}m{#1}}
\newcolumntype{R}[1]{>{\raggedleft\let\newline\\\arraybackslash\hspace{0pt}}m{#1}}
\def\epsilon{\varepsilon}
\newcommand{\unarycnn}{{\em Unary-CNN}\xspace}
\newcommand{\pairwisecnn}{{\em Pairwise-CNN}\xspace}
\newlength{\myskip}
\renewenvironment{itemize}%
  {\begin{list}{$\bullet$}%
     {\topsep=0in\itemsep=0pt\parsep=0pt\partopsep=0in\usecounter{itemi}}%
   }{\end{list}\addvspace{0pt}}
\let\corollary\@undefined
\let\endcorollary\@undefined
\let\definition\@undefined
\let\enddefinition\@undefined
\let\proof\@undefined
\let\endproof\@undefined
\let\theorem\@undefined
\let\c@theorem\@undefined
\let\endtheorem\@undefined
\let\lemma\@undefined
\let\endlemma\@undefined
\let\example\@undefined
\let\c@example\@undefined
\let\endexample\@undefined
\let\remark\@undefined
\let\endremark\@undefined
\let\proposition\@undefined
\let\endproposition\@undefined
\let\property\@undefined
\let\endproperty\@undefined
\newtheoremstyle{tightItalic}% name
  {0.5\myskip}%      Space above
  {0\myskip}%      Space below
  {}%         Body font
  {}%         Indent amount (empty = no indent, \parindent = para indent)
  {\itshape}% Thm head font
  {.}%        Punctuation after thm head
  { }%     Space after thm head: " " = normal interword space;
\newtheoremstyle{tightBf}% name
  {0.5\myskip}%      Space above
  {0\myskip}%      Space below, 0 since amsart starts a paragraph
  {}%         Body font
  {}%         Indent amount (empty = no indent, \parindent = para indent)
  {\bf}% Thm head font
  {.}%        Punctuation after thm head
  {.5em}%     Space after thm head: " " = normal interword space;
\theoremstyle{definition}
\theoremstyle{tightBf}
\declaretheorem[style=tightBf,parent=section]{thm}
\numberwithin{thm}{section}
\declaretheorem[style=tightBf,sibling=thm,name=Theorem]{theorem}
\declaretheorem[style=tightBf,sibling=theorem,name=Proposition]{proposition}
\numberwithin{example}{section}
\theoremstyle{tightItalic}
\newtheorem*{proof}{Proof}
\crefname{section}{\parSym}{\parSym\parSym}
\Crefname{section}{\parSym}{\parSym\parSym}
\crefname{appendix}{Suppl.}{Suppl.}
\Crefname{appendix}{Suppl.}{Suppl.}
\ifcvprfinal\pagestyle{empty}\fi
\begin{document}

%%%%%%%%% TITLE
\def\mytitle{End-to-End Training of Hybrid CNN-CRF Models for Stereo}
\title{\mytitle}

\author{Patrick Kn\"obelreiter$^1$\\
{\tt \small knoebelreiter@icg.tugraz.at}\\
\and
Christian Reinbacher$^1$\\
{\tt \small reinbacher@icg.tugraz.at}\\
\and
Alexander Shekhovtsov$^1$\\
{\tt \small shekhovtsov@icg.tugraz.at}\\
\and
Thomas Pock$^{1,2}$\\
{\tt \small pock@icg.tugraz.at}\\ 
\and
$^1$Institute for Computer Graphics and Vision\\
Graz University of Technology
\and
$^2$Center for Vision, Automation \& Control\\
AIT Austrian Institute of Technology \\
% For a paper whose authors are all at the same institution,
% omit the following lines up until the closing ``}''.
% Additional authors and addresses can be added with ``\and'',
% just like the second author.
% To save space, use either the email address or home page, not both
%\and
}

\maketitle
\thispagestyle{empty}

\newboolean{InAppendix}
\setboolean{InAppendix}{false}

%%%%%%%%% ABSTRACT
\begin{abstract}
We propose a novel and principled hybrid CNN+CRF model for stereo estimation. Our model allows to exploit the advantages of both, convolutional neural networks (CNNs) and conditional random fields (CRFs) in an unified approach. The CNNs compute expressive features for matching and distinctive color edges, which in turn are used to compute the unary and binary costs of the CRF. For inference, we apply a recently proposed highly parallel dual block descent algorithm which only needs a small fixed number of iterations to compute a high-quality approximate minimizer. As the main contribution of the paper, we propose a theoretically sound method based on the structured output support vector machine (SSVM) to train the hybrid CNN+CRF model on large-scale data end-to-end. Our trained models perform very well despite the fact that we are using shallow CNNs and do not apply any kind of post-processing to the final output of the CRF. We evaluate our combined models on challenging stereo benchmarks such as Middlebury 2014 and Kitti 2015 and also investigate the performance of each individual component.

\end{abstract}

%this is temporary to work on the structure
%\tableofcontents
%\ifcvprfinal
%\startcontents[sections]
%\printcontents[sections]{l}{1}{\setcounter{tocdepth}{2}}
%\fi
%%%%%%%%% BODY TEXT
% !TEX root = ../main.tex

\section{Introduction}
Stereo matching is a fundamental low-level vision problem. It is an ill-posed inverse problem, asking to reconstruct the depth from a pair of images. This requires robustness to all kinds of visual nuisances as well as a good prior model of the 3D environment. 
%Different methods for this problem have been proposed and improved throughout the whole history of computer vision. 
Prior to deep neural network data-driven approaches, progress had been made using global optimization techniques~\cite{Kolmogorov2006,Laude2016,ranftl2014non,Scharstein2002,Woodford-08} featuring robust surface models and occlusion mechanisms. Typically, these methods had to rely on engineered cost matching and involved choosing a number of parameters experimentally.
\par
Recent deep CNN models for stereo~\cite{Chen2015,Luo2016,Zbontar2016} learn from data to be robust to illumination changes, occlusions, reflections, noise, \etc. A deep and possibly multi-scale architecture is used to leverage the local matching to a global one. 
However, also deep CNN models for stereo rely a lot on post-processing, combining a set of filters and optimization-like heuristics, to produce final accurate results.
%However, to produce a final accurate result they still rely a lot on post-processing that combines a set of filters and optimization-like heuristics.
\par
In this work we combine CNNs with a discrete optimization model for stereo. This allows complex local matching costs and parametrized geometric priors to be put together in a global optimization approach and to be learned end-to-end from the data. %, \ie we do not use any kind of post-processing.
Even though our model contains CNNs, it is still easily interpretable. This property allows us to shed more light on the learning our network performs. We start from a CRF formulation and replace all hand-crafted terms with learned ones. %More specifically, we compute features and correlate them to get matching costs, we compute pairwise costs to courage or discourage label jumps and input them into a CRF.

\begin{figure}[t]
\centering
\begin{tikzpicture}[node distance=0.6cm, auto,]
 %nodes
 \node[punkt] (i0) {$I_0$};
 \node[punkt,below=0.5cm of i0] (i1) {$I_1$};
 \node[punkt,right=of i0,fill=blue!20] (unary0) {Unary CNN};
 \node[punkt,right=of i1,fill=blue!20] (unary1) {Unary CNN};
 \node[punkt,right= 3.5cm of  {$(i0)!0.5!(i1)$},fill=green!20] (correlation) {Correlation};
 \node[punkt,right=of correlation,fill=magenta!20] (crf) {CRF};
 \node[punkt,above= 1cm of {$(unary0)!0.6!(correlation)$},fill=orange!30] (pw) {Contrast Sensitive / Pairwise CNN};
 \node[punkt,right=of crf] (disp) {D};

% edges
 \draw[pil,->](i0) -- (unary0);
 \draw[pil,->](i1) -- (unary1);
 \draw[pil,->](unary0.east) -- (correlation.west);
 \draw[pil,->](unary1.east) -- (correlation.west);
 \draw[pil,->](correlation) -- (crf);
 \draw[pil,->](i0.east) to [bend left=25] (pw.west);
 \draw[pil,->](pw.east) to [bend left=25] (crf.north);
 \draw[pil,->](crf) -- (disp);
\end{tikzpicture}
\caption{Architecture: A convolutional neural network, which we call \unarycnn computes features of the two images for each pixel. The features are compared using a {\em Correlation} layer. The resulting matching cost volume becomes the unary cost of the {\em CRF}. The pairwise costs of the CRF are parametrized by edge weights, which can either follow a usual contrast sensitive model or estimated by the \pairwisecnn.
%
%Architecture: We compute CNN features for the two images using shared parameters (``Unary-Net", cyan box) and compute densely a similarity score for a fixed number of disparities on the epipolar lines using a fixed correlation function (yellow box). The output of the correlation is softmax-normalized to form a valid probability distribution. This output is used as the unary terms in the CRF model (purple box). Additionally, we train a ``Pairwise-Net" (green box) to calculate image-dependend pairwise-costs for the CRF.
}
\label{fig:model}
\end{figure}
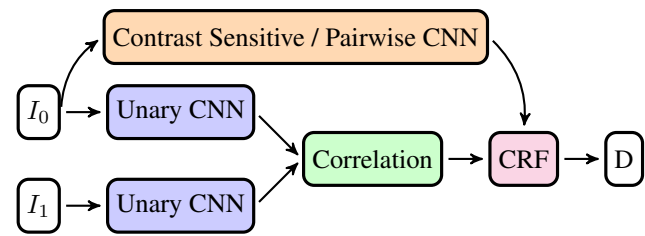
We propose a hybrid CNN-CRF model illustrated in~\cref{fig:model}. Our \unarycnn computes local features of both images which are then compared in a fixed correlation metric. Our \pairwisecnn can additionally estimate contrast-sensitive pairwise costs in order to encourage or discourage label jumps. Using the learned unary and pairwise costs, the CRF tries to find a joint solution optimizing the total sum of all unary and pairwise costs in a 4-connected graph. This model generalizes existing engineered approaches in stereo as well as augment existing fully learned ones. The \unarycnn straightforwardly generalizes manually designed matching costs such as those based on differences of colors, sampling-insensitive variants~\cite{Birchfield-98}, local binary patterns (\eg, Census transform~\cite{Zabih1994}), \etc. The \pairwisecnn generalizes a contrast-sensitive regularizer~\cite{Boykov01b}, which is the best practice in {MRF/CRF} models for segmentation and stereo.

\par
To perform inference in the CRF model we apply the fast method of~\cite{Discrete-Continuous-16}, which improves over heuristic approaches combining multiple post-processing steps as used in~\cite{Chen2015,Luo2016,Zbontar2016}.
%as cross cost aggregation + SGM + further post-processing filtering used in~\cite{Zbontar2016} or semi global block matching and slanted plane fitting in~\cite{Luo2016}.
We deliberately chose not to use any post-processing in order to show that most of the performance gain through post-processing can be covered by a well-trained CRF model. %It is especially desirable to avoid post-processing.
%We believe the 4-connected CRF with inference based on dynamic programming is more suited for stereo than the mean field method in fully connected CRF~\cite{Kraehenbuehl2012} popular in segmentation. The reason is that depth maps involve slanted surfaces, while~\cite{Kraehenbuehl2012} favors piece-wise constant solutions (filtering propagate information in fronto-parallel planes), suitable for segmentation.
%
While previously, methods based on LP-relaxation were considered prohibitively expensive for stereo,~\cite{Discrete-Continuous-16} reports a near real-time performance, which makes this choice definitely faster than a full deep architecture~\cite{Zbontar2016} and competitive in speed with inference heuristics such as SGM~\cite{Hirschmueller2005}, MGM~\cite{Facciolo-15}, \etc.
\par
We can train the complete model shown in~\cref{fig:model} using the structured support vector machine (SSVM) formulation and propagating its subgradient through the networks. %We are not aware of previous work that could train a linear or non-linear general CRF model for stereo.
%We are not aware of previous work that could train a non-linear CNN+CRF model of this scale.
Training a non-linear CNN+CRF model of this scale is a challenging problem that has not been addressed before.
We show this is practically feasible by having a fast inference method and using an approximate subgradient scheme. %The SSVM formulation has several desired properties: it
%Since the inference can only process full images and not individual patches, we also train on the complete images.
Since at test time the inference is applied to complete images, we train it on complete images as well.
This is in contrast to the works~\cite{Luo2016,Zagoruyko-15,Zbontar2016} which sample patches for training. The SSVM approach optimizes the inference performance on complete images of the training set more directly. While with the maximum likelihood it is important to sample hard negative examples (hard mining)~\cite{SimoSerraICCV2015}, the SSVM determines labellings that are hard to separate as the most violated constraints.
%hard matching pairs of pixels within an image automatically as the most violated constraints.
\par
We observed that the hybrid CNN+CRF network performs very well already with shallow CNN models, such as 3-7 layers. With the CRF layer the generalization gap is much smaller (less overfitting) than without. Therefore a hybrid model can achieve a competitive performance using much fewer parameters than the state of the art. This leads to a more compact model and a better utilization of the training data.
\par
%We deliberately not using any post-processing in order to show that the large part of its performance gain in~\cite{x,x} can be covered by a well trained CRF model. It is especially desirable to avoid any non-differentiable post-processing.
%and has much fewer parameters than deep models (less overfitting)
We report competitive performance on benchmarks using a shallow hybrid model.
 % without post-processing.
 Qualitative results demonstrate that our model is often able to delineate object boundaries accurately and it is also often robust to occlusions, although our CRF did not include explicit occlusion modeling.
\paragraph{Contribution}
%To summarize, our contributions are:
%\begin{itemize}
%\item Hybrid CNN+CRF model for stereo, which can be trained end-to-end optimizing the performance on complete images;
%\item Takes advantages of best engineered stereo techniques and deep CNN models: learned matching costs and learned contrast-sensitive edge costs;
%\item SSVM training is shown feasible for models of this size.
%\item
We propose a hybrid CNN+CRF model for stereo, which utilizes the expressiveness of CNNs to compute good unary- as well as pairwise-costs and uses the CRF to easily integrate long-range interactions.
%\item
We propose an efficient approach to train our CNN+CRF model.
%\item
The trained hybrid model is shown to be fast and yields competitive results on challenging datasets.
%\item
We do not use any kind of post-processing.
%\item
The code to reproduce the results will be made publicly available\footnote{\url{https://github.com/VLOGroup}}. %to allow the community to directly use our algorithm
\section{Related Work}
\paragraph{CNNs for Stereo}
Most related to our work are CNN matching networks for stereo proposed by~\cite{Chen2015,Luo2016} and the {\em fast} version of \cite{Zbontar2016}. They use similar architectures with a siamese network~\cite{Bromley1994} performing feature extraction from both images and matching them using a fixed correlation function (product layer). Parts of our model~(see \cref{fig:model}) denoted as \unarycnn and {\em Correlation} closely follow these works.
%We experimented with more shallow variants of the networks and applied a different training procedure.
However, while~\cite{Chen2015,Luo2016, Zbontar2016} train by sampling matching and non-matching image patches, following the line of work on more general matching / image retrieval, we train from complete images. Only in this setting it is possible to extend to a full end-to-end training of a model that includes a CRF (or any other global post-processing) optimizing specifically for the best performance in the dense matching. The {\em accurate} model of \cite{Zbontar2016} implements the comparison of features by a fully connected NN, which is more accurate than their {\em fast} model but significantly slower. All these methods make an extensive use of post-processing steps that are not jointly-trainable with the CNN: \cite{Zbontar2016} applies cost cross aggregation, semi-global matching, subpixel enhancement, median and bilateral filtering; \cite{Luo2016} uses window-based cost aggregation, semi-global matching, left-right consistency check, subpixel refinement, median filtering, bilateral filtering and slanted plane fitting; \cite{Chen2015} uses semi-global matching, left-right consistency check, disparity propagation and median-filtering.
% to obtain the final disparity map. %This makes it much harder to compare the core trainable architectures across the methods.
%This makes the comparison of different architectures by the final benchmark less clear.
Experiments in~\cite{Luo2016} comparing bare networks without post-processing show that their fixed correlation network outperforms the {\em accurate} version of~\cite{Zbontar2016}.

\paragraph{CNN Matching}
General purpose matching networks are also related to our work. \cite{Zagoruyko-15} used a matching CNN for patch matching, \cite{Dosovitskiy2015} used it for optical flow and \cite{Mayer_2016_CVPR} used it for stereo, optical flow and scene flow. Variants of networks~\cite{Dosovitskiy2015,Mayer_2016_CVPR} have been proposed that include a correlation layer explicitly; however, it is then used as a stack of features and followed by up-convolutions regressing the dense matching. Overall, these networks have a significantly larger number of parameters and require a lot of additional synthetic training data. %The output is typically blurry and currently inferior to optimization-based approaches (\eg~\cite{Vogel-15} in KITTI scene flow).
%
%The architecture of FlowNet \cite{Fischer2015} consists of a Siamese network \cite{Bromley1994} and uses three convolutional layers with pooling before the extracted features are correlated using a special correlation layer. On top of the correlation layer, they continue with multiple convolution layers for encoding and decoding. The first part of this network (up to the correlation layer) is probably the most similar network architecture to ours. However, we decided to not use any pooling operations and cut the network after the correlation layer. Mayer \etal \cite{Mayer2015} have also some similar architecture, but they concatenate three networks for disparity, optical flow and scene flow estimation. These approaches do end-to-end training, where everything is done using the pure CNN. A noticeable difference to these approaches is also the amount of training data. They used huge training sets to train their networks, where we focus on the provided training data for the Middlebury and KITTI benchmark, respectively.

\paragraph{Joint Training (CNN+CRF training)}
End-to-end training of CNNs and CRFs is helpful in many applications. The fully connected CRF~\cite{Kraehenbuehl2012}, performing well in semantic segmentation, was trained jointly in \cite{Chen2014a,Zheng2015} by unrolling iterations of the inference method (mean field) and backpropagating through them. Unfortunately, this model does not seem to be suitable for stereo because typical solutions contain slanted surfaces and not piece-wise constant ones (the filtering in~\cite{Kraehenbuehl2012} propagates information in fronto-parallel planes). Instead simple heuristics based on dynamic programming such as~SGM \cite{Hirschmueller2005} / MGM~\cite{Facciolo-15} are typically used in engineered stereo methods as post-processing. However they suffer from various artifacts as shown in~\cite{Facciolo-15}. A trained inference model, even a relatively simple one, such as dynamic programming on a tree~\cite{Psota_2015_ICCV}, can become very competitive. \citet{Scharstein07learnin} and \citet{Christopher-12-learnig} have considered training CRF models for stereo, linear in parameters. To the best of our knowledge, training of inference techniques with CNNs has not yet been demonstrated for stereo. We believe the reason for that is the relatively slow inference for models over pixels with hundreds of labels. Employing the method proposed in~\cite{Discrete-Continuous-16}, which is a variant of a LP-relaxation on the GPU, allows us to overcome this limitation. In order to train this method we need to look at a suitable learning formulation. Specifically, methods approximating marginals are typically trained with variants of approximate maximum likelihood~\cite{Alahari10a,KirillovSFZ0TR15,LinSRH15,Nowozin-13,Christopher-12-learnig,Scharstein07learnin}.
%However, to the best of our knowledge, training of global inference techniques such as LP-relaxation based methods~\cite{Kolmogorov04} or belief propagation for stereo has no been yet demonstrated.
Inference techniques whose iteration can be differentiated can be unrolled and trained directly by gradient descent~\cite{Liu_2015_ICCV,Ochs2015,Ochs-16,Ranftl-14,Schwing-15,Tompson-14,Zheng2015}. Inference methods based on LP relaxation can be trained discriminatively, using a structured SVM approach~\cite{ChenSchwingICML2015,Franc-Laskov-11,Komodakis-11-train,Tsochantaridis-2005}, where parameters of the model are optimized jointly with dual variables of the relaxation (blended learning and inference). We discuss the difficulty of applying this technique in our setting (memory and time) and show that instead performing stochastic approximate subgradient descent is more feasible and practically efficient.

\section{CNN-CRF Model}
%\revisit{Instead of using seperately trained building blocks and combine them in the end, we aim to formulate the stereo problem as one combined block and optimize everything jointly. \cref{fig:model} visualizes our model, where the green circles represent the pairwise-net and the blue circles represent the unary-net. Both, the output of the unary-net and of the pairwise-net are then fed into the CRF. In our proposed model we unify feature-extraction
%
%The forward-pass through our model is clear. The left and the right image is fed into the {\em Unary CNN} (blue boxes) which extracts the same (!) features from both images. For each pixel, these extracted features are correlated for different disparites to form a correlation-volume. Similarly, the left image (= the reference image) is additionally fed into the {\em Pairwise CNN} (orange box) to get a prediction of the weighting for the pairwise potentials. Both, the correlation volume and the pairwise weights are then used by the CRF (red box) for inference.
%In the following subsections, we will investigate our three building blocks
%%, unary-net, pairwise-net and the CRF,
%in more detail.}

In this section we describe the individual blocks of our model (\cref{fig:model}) and how they connect.
\par
We consider the standard rectified stereo setup, in which epipolar lines correspond to image rows. Given the left and right images $I^0$ and $I^1$, the left image is considered as the {\em reference} image and for each pixel we seek to find a matching pixel of $I^1$ at a range of possible disparities. The disparity of a pixel $i \in \Omega = \dom I^0$ is represented by a discrete label $x_i \in \L = \{0,\dots L-1\}$.
% Let $i+x_i$ denote the corresponding pixel of $I^1$ (by defining $+$ as appropriate).
%For a pixel $i \in \Omega$ and disparity label $k$ let $i+k$ denote the corresponding pixel of $I^1$.
\par
 % and how they connect together.
The \unarycnn extracts dense image features for $I^0$ and $I^1$ respectively, denoted as $\phi^0 = \phi(I^0; \theta_1)$ and $\phi^1 = \phi(I^1; \theta_1)$. %Parameters $\theta_1$ are shared by the two unary networks.
Both instances of the \unarycnn in \cref{fig:model} share the parameters $\theta_1$. For each pixel, these extracted features are then correlated at all possible disparities to form a correlation-volume (a matching confidence volume) $p \colon \Omega \times \L \to [0,1]$.
%The correlation block computes a dense matching prediction confidence volume $p$.
%The two unary CNNs and the correlation in~\cref{fig:model} form a siamese architecture that extracts features
%$f^0 = f(I^0; \theta)$ and $f^1 = f(I^1; \theta)$ of the two input images $I^0$ and $I^1$ and computes their dense matching prediction confidence volume $p$. Parameters $\theta$ are shared by the two unary networks.
The confidence %for the $i$-th pixel and disparity label $x_i$ is $p_i(x_i; \theta_1) \in [0,1]$. This value
$p_i(x_i)$ is interpreted as how well a window around pixel $i$ in the first image $I^0$ matches to the window around pixel $i+x_i$ in the second image $I^1$. Additionally, the reference image $I^0$ is used to estimate contrast-sensitive edge weights either using a predefined model based on gradients, or using a trainable pairwise CNN. The correlation volume together with the pairwise weights are then fused by the CRF inference, optimizing the total cost.
\subsection{Unary CNN}
We use 3 or 7 layers in the \unarycnn and 100 filters in each layer. The filter size of the first layer is $(3 \times 3)$ and the filter size of all other layers is $(2 \times 2)$. We use the $\tanh$ activation function after all convolutional layers. Using $\tanh$ i) makes training easier, \ie, there is no need for intermediate (batch-)normalization layers and ii) keeps the output of the correlation-layer bounded. Related works~\cite{BailerVS16,BHW10} have also found that $\tanh$ performs better than ReLU for patch matching with correlation.
\subsection{Correlation}
%This block is similar to the product layer in \cite{Chen2015,Luo2016,Zbontar2016}. %Then we can define the output of our matching CNN more precisely as
%For a pixel $i$ of $I^0$ and disparity label $k$ let $i+k$ denote the corresponding pixel of $I^1$.
%The correlation is simply the scalar product of the CNN features at the matching pixels:
%For all disparity labels $k\in\L$ we compute the cross-correlation of $C$-dimensional feature vectors $f^0_i$ and $f^1_{i+k}$ extracted from the left and right image, respectively, as:
The cross-correlation of features $\phi^0$ and $\phi^1$ extracted from the left and right image, respectively, is computed as
\begin{equation}
%p_{i}(k) = \softmax(\<f^0_{i}, f^1_{i+k}\>)  \quad \forall i \in \Omega, \forall k \in \L,
%f(I^0, I^1; \theta)_{i,k} = \< f^0_{i}, f^1_{i+k} \>  \quad \forall i \in \dom{I^0}, \forall k \in \mathcal{X},
p_{i}(k) = \frac{e^{\< \phi^0_{i}, \phi^1_{i+k} \>}}{\sum_{j\in\L} e^{\< \phi^0_{i}, \phi^1_{i+j} \>}}  \quad \forall i \in \Omega, \forall k \in \L.
%f(I^0, I^1; \theta)_{ij} = \left \langle f^0_{ij}, f^1_{i+k,j} \right \rangle  \quad \forall i\forall j, k \in \mathcal{X},
\end{equation}
%where $\softmax(u) = e^{u_k} / \sum_{k'} e^{u_{k'}}$.
Hence, the correlation layer outputs the softmax normalized scalar products of corresponding feature vectors. In practice, the normalization fixes the scale of our unary-costs which helps to train the joint network. Since the correlation function is homogeneous for all disparities, a model trained with some fixed number of disparities can be applied at test time with a different number of disparities.
%The number of disparities during training and testing need not be the same.
%is fixed, for the evaluation we can use an arbitrary number of disparities.
%During training the number of disparities is fixed, for the evaluation we can use an arbitrary number of disparities.
%Of course, we need to compute the appropriate gradients in order to train the model using backpropgation.
% it is still just a conventional correlation and NO cosine-similarity, because, cosine-similarity would require to normalize the outcome of the last-feature-layer to satisfy ||f^1|| = ||f^2|| = 1
The {\em pixel-wise independent estimate} of the best matching disparity
\begin{equation}\label{argmax}
x_i \in \arg\max_k ~ p_i(k)
\end{equation}
is used for the purpose of comparison with the full model.
% We use the Unary CNN (standalone) as a base-line model in the experiments.

% !TEX root = ../main.tex

\subsection{CRF}
\label{sec:CRF}
The CRF model optimizes the total cost of complete disparity labelings, %is defined as
\begin{equation}\label{eq:crf}
\min_{x \in \mathcal{X}}\big( f(x) := \sum_{i \in \mathcal{V}} f_i(x_i) + \sum_{ij\in\mathcal{E}} f_{ij}(x_i,x_j) \big).
\end{equation}
where $\mathcal{V}$ is the set of all nodes in the graph, \ie, the pixels, $\mathcal{E}$ is the set of all edges and $\X = \L^\V$ is the space of labelings.
Unary terms $f_i \colon \L \to \Real$ are set as $f_i(k) = -p_i(k)$, the matching costs. The pairwise terms $f_{ij} \colon \L\times\L \to \Real$ implement the following model:
\begin{equation}\label{eq:pairwise-model}
f_{ij}(x_i,x_j) = w_{ij} \rho(|x_i - x_j|;P_1,P_2).
\end{equation}
The weights $w_{ij}$ may be set either as manually defined contrast-sensitive weights~\cite{Boykov00}:
\begin{align}\label{eq:fixed-pairwise-model}
w_{ij} = \exp(-\alpha |I_i-I_j|^\beta) \tab\tab \forall ij\in\E,
\end{align}
allowing cheaper disparity jumps across strong image gradients, or using the learned model of the \pairwisecnn. The function $\rho$ is a robust penalty function defined as
\begin{equation}\label{eq:interaction model}
\rho(|x_i - x_j|) =
\begin{cases}
0 & \text{if } |x_i - x_j| = 0, \\
P_1 & \text{if } |x_i - x_j| = 1, \\
P_2 & \text{otherwise},
\end{cases}
%\label{eq:rho}
\end{equation}
popular in stereo~\cite{hirschmuller2011semi}. Cost $P_1$ penalizes small disparity deviation of one pixel representing smooth surfaces and $P_2$ penalizes larger jumps representing depth discontinuities.
%\par
We use only pairwise-interactions on a 4-connected grid.% (as opposed to the popular fully connected CRF often used in semantic segmentation).
%In the stereo setting the labels are actually ordered (in general this is not true for semantic segmentation), \ie, if the label jumps from label $x_i$ to label $x_j$, then this corresponds to a depth-jump by $|x_i - x_j|$ disparities. Hence, the distance in labels can be directly used in the penalty function \eqref{eq:interaction model}.
\paragraph{Inference} Although the direct solution of~\eqref{eq:crf} is intractable~\cite{Li2016}, there are a number of methods to perform approximate inference~\cite{ChenSchwingICML2015,Kolmogorov-06-convergent-pami} as well as related heuristics designed specifically for stereo such as~\cite{Facciolo-15,hirschmuller2011semi}. We apply our dual minorize-maximize method (\protect\anchor[{\texttt{Dual\_MM}}]{DMM})~\cite{Discrete-Continuous-16}, which is sound because it is based on LP-relaxation, similar to TRW-S~\cite{Kolmogorov-06-convergent-pami}, and massively parallel, allowing a fast GPU implementation.
\par We give a brief description of \DMM, which will also be needed when considering training.
Let $f$ denote the concatenated {\em cost vector} of all unary and pairwise terms $f_i, f_{ij}$. The method starts from a decomposition of $f$ into horizontal and vertical chains, $f = f^{1} + f^{2}$ (namely, $f^{1}$ includes all horizontal edges and all unary terms and $f^{2}$ all vertical edges and zero unary terms). %, in which pairwise terms are split disjointly and unary terms may be initially set as $f^1_i = f_i$ and $f^2_i = 0$.
The value of the minimum in~\eqref{eq:crf} is lower bounded by
\begin{align}\label{eq:crf-dual}
\max_\lambda \big( D(\lambda) := \min_{x^1}(f^{1} + \lambda)(x^1) + \min_{x^2}(f^{2} - \lambda)(x^2) \big),
\end{align}
where $\lambda$ is the vector of Lagrange multipliers corresponding to the constraint $x^1 = x^2$.
The bound $D(\lambda)\leq \eqref{eq:crf}$ holds for any $\lambda$, however it is tightest for the optimal $\lambda$ maximizing the sum in the brackets.
The \DMM algorithm performs iterations towards this optimum by alternatively updating $\lambda$ considering at a time either all vertical or horizontal chains, processed in parallel. Each update monotonously increases the lower bound~\eqref{eq:crf-dual}. The final solution is obtained as
\begin{align}\label{eq:argmin-reparametrized}
x_i \in \argmin_{k} (f^1_i + \lambda_i)(k),
\end{align}
\ie, similar to~\eqref{argmax}, but for the reparametrized costs $f^{1} + \lambda$. If the inference has converged and the minimizer $x_i$ in~\eqref{eq:argmin-reparametrized} is unique for all $i$, then $x$ is the optimal solution to the energy minimization~\eqref{eq:crf}~\cite{Komodakis-subgradient,Werner-PAMI07}. %$f^1_i + \lambda_i$.
%Updates can be understood as a minorize-maximize technique applied to the dual problem~\eqref{eq:crf-dual}: each update considers one term in the sum~\eqref{eq:crf-dual} and minorizes it
%constructing a minorant on either horizontal or vertical chains (which can be done in parallel) and then updating $\lambda$. The construction of the minorant
%thus also tightening the bound~\eqref{LP-SSVM-bound}.Let $x^1$ and $x^2$ be the respective minimizers for the resulting suboptimal $\lambda$.

%For the inference we use the LP-relaxation-based method~\cite{Discrete-Continuous-16}, \protect\anchor[{\texttt{Dual\_MM}}]{DMM}. It is a dual block-coordinate descent algorithm, its advantage is the massively parallel implementation~\cite{Discrete-Continuous-16}\footnote{GPU implementation of~\cite{Discrete-Continuous-16} provided by its authors.} and very good performance for a small number of iterations, comparable to the best sequential algorithms.

\subsection{Pairwise CNN}
In order to estimate edge weights with a pairwise CNN, we use a 3-layer network. We use 64 filters with size ($3\times3$) and the $\tanh$ activation function in the first two layers to extract some suitable features. The third layer maps the features of pixel $i$ to weights $(w_{ij} \mid ij\in \E)$ corresponding to the two edge orientations, where we use the absolute value function as activation. This ensures that the pairwise costs are always larger than $0$ and that our \pairwisecnn has the ability to scale the output freely. In practice this is desirable because it allows us to automatically learn the optimal trade-off between data-fidelity and regularization. The parameters of this network will be denoted as $\theta_2$. The weights $w$ can be stored as a $2$-channel image (one channel per orientation). They generalize over the manually defined contrast-sensitive weights defined in \eqref{eq:fixed-pairwise-model} in the pairwise-terms $f_{ij}$ \eqref{eq:pairwise-model}. Intuitively, this means the pairwise network can learn to apply the weights $w$ adaptively based on the image content in a wider neighborhood.
%These weights contribute to the CRF pairwise terms $f_{ij}$ as follows:
% %\begin{equation}
% %f_{ij} = \rho(|x_i - x_j|;P1_{ij},P2_{ij}),
% %\end{equation}
% %and penalties $P1_{ij},P2_{ij}$ are estimated by Pairwise CNN depending on the pixel position $i$ and edge orientation $ij$. In the 4-connected graph there are only two orientations $x-$ and $y-$. Each of $P1$ and $P2$ thus can be imagined as a $2$-channel images (one channel per orientation).
% %\par \revisit[or this:]
% \begin{equation}
% f_{ij}(x_i,x_j) = w_{ij}\rho(|x_i - x_j|;P_1,P_2).
% \label{eq:pairwiseTerm}
% \end{equation}
% %where weights $w_{ij}$ are estimated by Pairwise CNN depending on the pixel position $i$ and edge orientation $ij$.
% %In a 4-connected graph there are only two orientations, namely $x$ and $y$.
The values $P_1, P_2$ remain as global parameters.
% %
% %, respectively.
% This network generalizes the manually engineered model~\eqref{eq:interaction model} with contrast-sensitive terms~\eqref{eq:fixed-pairwise-model} and can learn to apply it adaptively based on the image content in a wider neighborhood.
% %should predict low values at positions, where depth-discontinuities are likely and high values else.
\cref{fig:pairwise-example} shows an example output of the \pairwisecnn.
\begin{figure}[t]
\centering
\includegraphics[width=0.5\linewidth]{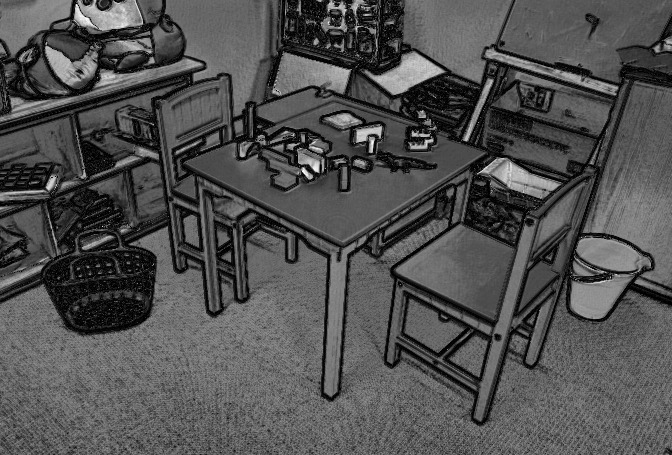}%{Learned}%
\includegraphics[width=0.5\linewidth]{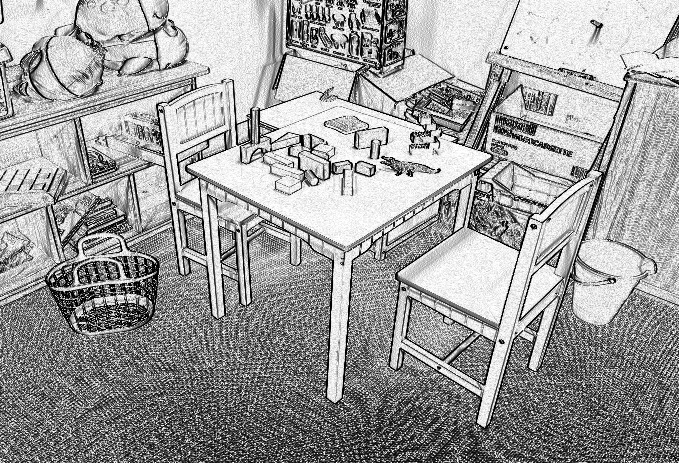}%{Fixed}%

%\labelgraphics[width=0.5\linewidth]{../figures/pretty_images/pw/train_041ml_wy___Playtable-imperfecttrain_E__00}{Learned}%
%\labelgraphics[width=0.5\linewidth]{../figures/pretty_images/pw/playtable_x}{Fixed}%
%\begin{minipage}{0.49\columnwidth}
%\includegraphics[width=0.5\linewidth]{../figures/pretty_images/pw/train_041ml_wy___Playtable-imperfecttrain_E__00}%
%\end{minipage}
%\begin{minipage}{0.49\columnwidth}
%\includegraphics[width=0.5\linewidth]{../figures/pretty_images/pw/playtable_x}
%\end{minipage}
\caption{Learned vs fixed pairwise costs: Visualization of the pairwise costs between two neighboring pixels in horizontal direction using the learned \pairwisecnn (left) and a fixed edge-function (right). Dark pixels indicate a low cost for changing the label and bright pixels indicate a high cost for a label-switch. Note, how the dark pixels follow object outlines (where depth discontinuities are likely) and how texture-edges tend to be suppressed (\eg, on the floor) in the learned version.}
\label{fig:pairwise-example}
\end{figure}

% !TEX root = ../main.tex

\section{Training}
\label{sec:training}
One major goal of this work is the end-to-end training of the complete model in \cref{fig:model}. %everything, matching CNN, pairwise CNN and CRF, jointly
%As a baseline we consider a pixel-wise trained CNN model, which is obtained by fixing all CRF interactions to zero.
For the purpose of comparison of different components we train 3 types of models, of increasing generality:
\begin{itemize}
\item Pixel-wise \unarycnn: model in which CRF interactions are set to zero and \pairwisecnn is switched off.
\item Joint \unarycnn+CRF model in which the \pairwisecnn is fixed to replicate exactly the contrast-sensitive model~\eqref{eq:fixed-pairwise-model}. Trained parameters are: \unarycnn and global parameters $P_1,P_2$.%,\alpha,\beta$.
\item Joint model with trained \unarycnn and \pairwisecnn (=complete model). Trained Parameters are:  \unarycnn , \pairwisecnn and global parameters $P_1, P_2$.
\end{itemize}

\subsection{Training Unary CNN in the Pixel-wise Model}
\label{subsec:training_unary}
For the purpose of comparison, we train our \unarycnn in a pixel-wise mode, similarly to~\cite{Chen2015,Luo2016,Zbontar2016}. For this purpose we set the CRF interactions to zero (\eg, by letting $P_1=P_2=0$), in which case the resulting decision degenerates to the pixel-wise independent $\argmax$ decision rule~\cref{argmax}. %Pairwise CNN is not active in this setting.
%If we consider making a prediction of the disparity at a single pixel only,
Training such models can be formulated in different ways, using gradient of the likelihood / cross-entropy~\cite{Luo2016,Zbontar2015a}, reweighed regression~\cite{Chen2015} or hinge loss~\cite{Zbontar2015}. Following~\cite{Luo2016,Zbontar2015a} we train parameters of the \unarycnn $\theta_1$ using the cross-entropy loss, %. %We summarize all parameters of our unary CNN into a vector $\theta_1$.
%Then the minimization problem can be defined as
%the minimization problem can be defined as
\begin{equation}
\min_{\theta_1} \sum_{i \in \Omega} \sum_{k \in \mathcal{X}} p^{gt}_i(k) \log p_i(k; \theta_1),
\label{eq:unaryLoss}
\end{equation}
where $p^{gt}_i(k)$ is the one-hot encoding of the ground-truth disparity for the $i$-th pixel.
%
%\par
%The {\em learning problem} can be formulated as the following bilevel optimization:
%
\subsection{Training Joint Model}
\label{subsec:training_joint}
%\begin{itemize}
%\item SSVM formulation, hinge surrogate loss, Fisher consistency for classification
%\item Gradient of Unary Terms
%\item Error of approximate inference
%\end{itemize}
%We follow the margin rescaling formulation~\cite{Taskar03max-marginmarkov,Tsochantaridis-2005}, also known as Maximum Margin Markov Network.
%We follow the margin rescaling formulation~\cite{Taskar03max-marginmarkov,Tsochantaridis-2005}, also known as Maximum Margin Markov Network.
We apply the structured support vector machine formulation, also known as the maximum margin Markov network~\cite{Taskar03max-marginmarkov,Tsochantaridis-2005}, in a non-linear setting. After giving a short overview of the SSVM approach we discuss the problem of learning when no exact inference is possible. We argue that the blended learning and inference approach of~\cite{ChenSchwingICML2015,Komodakis-11-train} is not feasible for models of our size. We then discuss the proposed training scheme approximating a subgradient of a fixed number of iterations of~\DMM.
%Let's recall the margin rescaling upper bound on the loss (aka M$^3$N -- Maximum Margin Markov Network).
%\par Given a training sample consisting of an input image pair and the true disparity $x^*$ and a loss function $l(x,x^*)$,
\par
\paragraph{SSVM} Assume that we have a training sample consisting of an input image pair $I=(I^0,I^1)$ and the true disparity $x^*$. %Let the disparity prediction $x_i$ be computed as pixel-wise independent $\argmax$~\eqref{argmax} for each pixel $i$.
Let $x$ be a disparity prediction that we make. We consider an additive loss function
\begin{align}\label{eq:loss}
l(x,x^*) = \sum_{i} l_i(x_i,x^*_i),
\end{align}
where the pixel loss $l_i$ is taken to be $l_i(x_i,x^*_i) = \min(|x_i - x^*_i|,\tau)$, appropriate in stereo reconstruction. The empirical risk is the sum of losses~\eqref{eq:loss} over a sample of several image pairs, however for our purpose it is sufficient to consider only a single image pair.
When the inference is performed by the CRF \ie, the disparity estimate $x$ is the minimizer of~\eqref{eq:crf}, training the optimal parameters $\theta = (\theta_1, \theta_2, P_1, P_2)$ can be formulated in the form of a {\em bilevel optimization}:
%\begin{align}\label{learning-bilevel}
%& \min_{\theta} \big\{ l(x,x^*) \mid  x \in \argmin_{x\in\X} f(x;\theta) \big\}.
%\end{align}
\begin{subequations}
\begin{eqnarray}\label{learning-bilevel}
&& \min_\theta l(x,x^*) \\
\label{learning-bilevel-constraint}
&& \text{s.t. } x \in \arg \min_{x \in \mathcal{X}} f(x; \theta).
\end{eqnarray}
\end{subequations}

%The {\em training solution} is denoted $x^*$.
%For clarity, we speak of a single training image pair, the extension to training with more independent image pairs is straightforward.

Observe that any $x\in\argmin f(x)$ in~\eqref{learning-bilevel-constraint} necessarily satisfies $f(x) \leq f(x^*)$. Therefore, for any $\gamma>0$, the scaled loss $\gamma l(x,x^*)$ can be upper-bounded by
%\begin{subequations}\label{hinge-loss-ssvm}
%\begin{align}\label{hinge-loss-ssvm}
%%\hspace{-0.3cm}
%%\label{SSVM-step1}
%& \notag \max_{\vphantom{f^{f^g}} x \mid f(x) \leq f(x^*)\hskip-3ex}\hskip-1ex l(x,x^*)
%\leq
%\max_{\vphantom{f^{f^g}} x \mid f(x) \leq f(x^*)\hskip-3ex}\hskip-1ex \big[ l(x,x^*) + f(x^*)-f(x) \big]\\
%%\label{hinge-loss}
%%
%&\leq \max_x \big[ l(x,x^*) + f(x^*)-f(x)\big].
%\end{align}
\begin{subequations}
\begin{eqnarray}\label{hinge-loss-ssvm}
&& \max_{x: ~f(x) \leq f(x^*)} \gamma l(x, x^*) \\
&& \leq
\max_{x: ~f(x) \leq f(x^*)} \left [f(x^*) - f(x) + \gamma l(x, x^*) \right] \\
\label{hinge-loss-ssvm-c}
&&\leq \max_x \left[ f(x^*) - f(x) + \gamma l(x, x^*)\right].
\end{eqnarray}
\end{subequations}
A subgradient of~\eqref{hinge-loss-ssvm-c} \wrt $(f_i \mid i\in\V)$ can be chosen as
\begin{align}\label{eq:crf-grad-exact}
\delta(x^*) - \delta(\bar x),
%\leftbb x^* = k \rightbb  - \leftbb \bar x = k \rightbb,
\end{align}
where $\delta(x)_i$ is a vector in $\Real^{\L}$ with components $(\leftbb x_i = k\rightbb \mid k\in\L)$, \ie the 1-hot encoding of $x_i$, and %$\delta \colon \L \to \Real^K \colon x_i \mapsto \leftbb x_i = k \rightbb$.
$\bar x$ is a (generally non-unique) solution
to the {\em loss augmented inference} problem
\begin{align}\label{eq:loss-augmented}
\bar x \in \argmin_x \big[ \bar f (x) := f(x) - \gamma l(x,x^*) \big].
\end{align}
In the case of an additive loss function, problem~\eqref{eq:loss-augmented} is of the same type as~\eqref{eq:crf} with adjusted unary terms.

%To facilitate the intuition of why SSVM chooses the most violated constraint, the hinge loss~\eqref{hinge-loss-ssvm-c} can be equivalently written in the form
We facilitate the intuition of why the SSVM chooses the most violated constraint by rewriting the hinge loss~\eqref{hinge-loss-ssvm-c} in the form
\begin{align}\label{ssvm-hinge-xi}
& \min\{\xi \in \Real \mid (\forall x)\ \ \xi \geq f(x^*)  - f(x) + \gamma l(x,x^*)\},
\end{align}
%\begin{align}\label{ssvm-hinge-xi}
%& \min_{\xi \in \Real} \xi\\
%\notag & (\forall x)\ \  \xi \geq f(x^*)  - f(x) + \gamma l(x,x^*),
%\end{align}
which reveals the large margin separation property: the constraint in~\eqref{ssvm-hinge-xi} tries to ensure that the training solution $x^*$ is better than all other solutions by a margin $\gamma l(x,x^*)$ and the most violated  constraint sets the value of slack $\xi$. The parameter $\gamma$ thus controls the margin: a large margin may be beneficial for better generalization with limited data. Finding the most violated constraint in~\eqref{ssvm-hinge-xi} is exactly the loss-augmented problem~\eqref{eq:loss-augmented}.

%Unlike in the maximum likelihood approach, where it turns out to be a good heuristic to select hard examples for training~\cite{SimoSerraICCV2015} (hard mining), the SSVM hinge loss does this automatically.
%It is also known that this loss is Fisher-consistent~\cite{RamaswamyA12}, which means that with increasing number of samples the trained model will achieve the minimal Bayesian risk (\wrt the original loss \revisit), provided that we can minimize~\eqref{hinge-loss-ssvm} exactly.

%Note, by the nature of the objective, only pixels which cannot be decided correctly with a margin $\gamma l_i(x_i,x^*_i)$ participate in the gradient. Furthermore, in each pixel, the disparity $x_i$ most violating the requirement $p_i(x^*_i)  \geq p_i(x_i) + \gamma l_i(x_i,x^*_i)$ is automatically selected by the objective.

\paragraph{SSVM with Relaxed Inference} An obstacle in the above approach is that we cannot solve the loss-augmented inference~\eqref{eq:loss-augmented} exactly.
However, having a method solving its convex relaxation, we can integrate it as follows. Applying the decomposition approach to~\eqref{eq:loss-augmented} yields a lower bound on the minimization: $\eqref{eq:loss-augmented} \geq$
\begin{align}\label{LP-augmented-loss}
\bar D(\lambda) := \min_{x^1}(\bar f^{1} + \lambda)(x^1) + \min_{x^2}(\bar f^{2} - \lambda)(x^2)
\end{align}
for all $\lambda$. Lower bounding~\eqref{eq:loss-augmented} like this results in an upper-bound of the loss $\gamma l(x,x^*)$ and the hinge loss~\eqref{hinge-loss-ssvm}:
\begin{align}\label{LP-SSVM-bound}
\gamma l(x,x^*) \leq \eqref{hinge-loss-ssvm} \leq f(x^*) -\bar D(\lambda).
\end{align}
The bound is valid for any $\lambda$ and is tightened by maximizing $D(\lambda)$ in $\lambda$. The learning problem on the other hand minimizes the loss in $\theta$. Tightening the bound in $\lambda$ and minimizing the loss in $\theta$ can be written as a joint problem
%\begin{align}\label{LP-SSVM-bound}
%\min_\theta f(x^*; \theta) -\max_\lambda \bar D(\lambda; \theta).
%\end{align}
\begin{align}\label{LP-SSVM-bound-min}
\min_{\theta, \lambda} f(x^*; \theta) - \bar D(\lambda; \theta).
\end{align}
Using this formulation we do not need to find an optimal $\lambda$ at once; it is sufficient to make a step towards minimizing it. This approach is known as blended learning and inference~\cite{ChenSchwingICML2015,Komodakis-11-train}. It is disadvantageous for our purpose for two reasons: i) at the test time we are going to use a fixed number of iterations instead of optimal $\lambda$ ii) joint optimization in $\theta$ and $\lambda$ in this fashion will be slower and iii) it is not feasible to store intermediate $\lambda$ for each image in the training set as $\lambda$ has the size of a unary cost volume.
\paragraph{Approximate Subgradient}
We are interested in a subgradient of~\eqref{LP-SSVM-bound} after a fixed number of iterations of the inference method, \ie, training the unrolled inference. A suboptimal $\lambda$ (after a fixed number of iterations) will generally vary when the CNN parameters $\theta$ and thus the CRF costs $f$ are varied. While we do not fully backtrack a subgradient of $\lambda$ (which would involve backtracking dynamic programming and recursive subdivision in \DMM) we can still inspect its structure and relate the subgradient of the approximate inference to that of the exact inference. %still inspect the structure and see that ignoring dependence on $\lambda$ may be a useful approximation.
\par
\begin{restatable}{proposition}{PUnaryGrad}\label{P:unary-grad}
Let $\bar x^1$ and $\bar x^2$ be minimizers of horizontal and vertical chain subproblems in~\eqref{LP-augmented-loss} for a given $\lambda$. Let $\Omega_{{\neq}}$ be a subset of nodes for which $\bar x^1_i \neq \bar x_i^2$. Then a subgradient $g$ of the loss upper bound~\eqref{LP-SSVM-bound} \wrt $f_\V = (f_i \mid i\in \V)$ has the following expression in components
\begin{align}\label{subgrad-unary}
%g_i(l) = \leftbb \bar x^1_i =l \rightbb + \sum_{j \in \Omega_{\neq}}\Big(\frac{d \tilde \lambda_j(\bar x^1_j)}{d f_i(l)} - \frac{d \tilde \lambda_j(\bar x^2_j)}{d f_i(l)}\Big),
g_i(k) &= \big(\delta(x^*) - \delta(\bar x^1)\big)_i(k)\\
\notag & + \sum_{j \in \Omega_{\neq}}\big(J_{ij}(k,\bar x_i^2) - J_{ij}(k,\bar x_i^1) \big),
\end{align}
\end{restatable}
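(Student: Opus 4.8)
The plan is to treat the loss upper bound $f(x^*;\theta)-\bar D(\lambda;\theta)$ from \eqref{LP-SSVM-bound} as a composite function of the unary costs $f_\V$, in which $\lambda=\lambda(f)$ is the output of the fixed number of \DMM iterations, and to differentiate it by the chain rule, splitting the subgradient into a \emph{direct} part (computed with $\lambda$ held fixed) and an \emph{indirect} part routed through the dependence $\lambda(f)$. All equalities below are subgradient selections, since the chain minima are only piecewise linear and we commit to a particular pair of minimizers $\bar x^1,\bar x^2$.

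For the direct part I would first record how the unary costs enter the decomposition used in \eqref{LP-augmented-loss}. Because the additive loss $l$ and all unary terms live on the horizontal subproblem, the loss-augmented split is $\bar f^1=f^1-\gamma l$ and $\bar f^2=f^2$, so that $\bar f^2$ is independent of $f_\V$. Applying the envelope (Danskin) rule to the two chain minima with $\lambda$ frozen, the second term $\min_{x^2}(\bar f^2-\lambda)(x^2)$ has zero derivative with respect to $f_i(k)$, while the first term contributes the one-hot vector $\delta(\bar x^1)$. Together with the elementary derivative $\partial f(x^*)/\partial f_i(k)=\delta(x^*)_i(k)$, and the minus sign on $\bar D$, the direct part equals $\big(\delta(x^*)-\delta(\bar x^1)\big)_i(k)$, which is the first term of \eqref{subgrad-unary}.

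For the indirect part I would apply the envelope rule to $\bar D$ in the variable $\lambda$, obtaining the coefficient $\partial\bar D/\partial\lambda=\delta(\bar x^1)-\delta(\bar x^2)$. The key observation is that this coefficient is the per-node disagreement of the two chain decodings: it vanishes identically at every node where $\bar x^1_i=\bar x^2_i$, hence it is supported precisely on $\Omega_{\neq}$. Contracting this coefficient with the Jacobian $\partial\lambda/\partial f_\V$ of the unrolled inference and summing out the label index collapses the two one-hot vectors into two evaluations of the Jacobian, producing a sum over $\Omega_{\neq}$ of differences of Jacobian entries taken at the labels selected by $\bar x^2$ and by $\bar x^1$; with $J_{ij}$ denoting the corresponding block of $\partial\lambda/\partial f_\V$, this is exactly the second term of \eqref{subgrad-unary}. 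Verifying the node/label bookkeeping so that the surviving entries carry the arguments and signs shown is a routine but necessary step.

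The hard part is the indirect term, since the Jacobian $\partial\lambda/\partial f_\V$ is the derivative of a fixed number of \DMM sweeps, whose exact evaluation would require backtracking the dynamic program and the recursive chain subdivision. The content of the proposition is that this full Jacobian is never needed: because its coefficient $\delta(\bar x^1)-\delta(\bar x^2)$ is zero off $\Omega_{\neq}$, only the restriction of $J$ to the disagreement set survives, and when the inference has converged so that $\bar x^1=\bar x^2$ at every node, the indirect term disappears entirely and one recovers the exact-inference subgradient \eqref{eq:crf-grad-exact}. I would close by remarking that this localization to $\Omega_{\neq}$ is precisely what makes the approximate subgradient practical, as it confines the otherwise intractable Jacobian contribution to the typically small set of nodes on which the horizontal and vertical chains disagree.
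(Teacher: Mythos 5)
Your proposal is correct and follows essentially the same route as the paper's proof: you split the subgradient of $f(x^*)-\bar D(\lambda)$ into a direct part obtained by fixing the chain minimizers (yielding $\delta(x^*)-\delta(\bar x^1)$, since $\bar f^2$ and the pairwise/loss terms do not depend on $f_\V$) and an indirect part routed through the sub-Jacobian $J$ of the unrolled \DMM iterations, whose $\lambda$-coefficient $\delta(\bar x^1)-\delta(\bar x^2)$ is supported exactly on $\Omega_{\neq}$. This matches the paper's argument step for step, including the treatment of $J$ as valid only for perturbation directions that preserve the internal dynamic-programming minimizers.
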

\noindent where $J_{ij}(k,l)$ is a sub-Jacobian (matching $\frac{d \lambda_j(l)}{d f_i(k)}$ for a subset of directions $d f_i(k)$).
See \cref{sec:suppl_detail} for more details.
%where $\tilde \lambda$ is a linear function of $f$, obtained from $\lambda$ by fixing all minimizers involved in dynamic programming.
\par
We conjecture that when the set $\Omega_{\neq}$ is small, for many nodes the contribution of the sum in~\eqref{subgrad-unary} will be also small, while the first part in~\eqref{subgrad-unary} matches the subgradient with exact inference~\eqref{eq:crf-grad-exact}.
%The sub-Jakobian $J_{ij}(k,l)$ correspond to how the algorithm spreads the cost $f_i(k)$ over labels that are correlated. This spreading is mass-preserving and thus the portion which is summed over $\Omega_{\neq}$. % and therefore the mass summed over $J_{ij}(k,l)$
%There holds $|J_{ij}(k,l)| <1$
\par
\begin{proposition}For training the abbreviate inference with dual decomposition such as \DMM, we calculate the minimizer $\bar x^1$ after a fixed number of iterations and approximate the subgradient as
% \begin{align}\label{eq:crf-grad-approx}
$\delta(x^*) - \delta(\bar x^1)$.
% \end{align}
\end{proposition}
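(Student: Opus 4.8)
The plan is to recover the claimed rule as the explicit-dependence part of the exact subgradient of \cref{P:unary-grad}, discarding the implicit contribution that propagates through the dual variable $\lambda$. I would begin from the loss upper bound \eqref{LP-SSVM-bound}, which after inserting the definition \eqref{LP-augmented-loss} reads $f(x^*) - (\bar f^1+\lambda)(\bar x^1) - (\bar f^2-\lambda)(\bar x^2)$, where $\bar x^1,\bar x^2$ are the chain minimizers and $\lambda=\lambda(f)$ is the value reached after a fixed number of \DMM iterations, hence itself a function of the costs.

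First I would take the total derivative with respect to the unary block $f_\V=(f_i\mid i\in\V)$, splitting it into an explicit part (holding $\lambda$ fixed) and an implicit part $\partial\bar D/\partial\lambda\cdot\partial\lambda/\partial f$. By the envelope theorem the minimizers $\bar x^1,\bar x^2$ contribute nothing through their own dependence on $f$, so the explicit part collects only the two genuine occurrences of $f_i$: one in $f(x^*)$, giving $\delta(x^*)_i$, and one in the unary part of $\bar f^1$ (recall $\bar f^2$ carries zero unary terms), giving $-\delta(\bar x^1)_i$. This already produces the proposed $\delta(x^*)-\delta(\bar x^1)$.

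Next I would identify the implicit part. Again by the envelope theorem, $\partial\bar D/\partial\lambda_j=\delta(\bar x^1)_j-\delta(\bar x^2)_j$, which is supported exactly on the set $\Omega_{\neq}$ of nodes where the horizontal and vertical decisions disagree. Chaining with $J_{ij}=\partial\lambda_j/\partial f_i$ reproduces the sub-Jacobian sum of \eqref{subgrad-unary}, so the exact and approximate subgradients differ only by $\sum_{j\in\Omega_{\neq}}(J_{ij}(k,\bar x^2_j)-J_{ij}(k,\bar x^1_j))$. The approximation step is to drop this term, i.e. to treat $\lambda$ as constant in $f$. Its legitimacy I would anchor at a converged, consistent fixed point: when the minimizer in \eqref{eq:argmin-reparametrized} is unique we have $\bar x^1=\bar x^2=\bar x$, so $\Omega_{\neq}=\emptyset$, the correction vanishes identically, and $\delta(x^*)-\delta(\bar x^1)$ coincides with the exact-inference subgradient \eqref{eq:crf-grad-exact}; away from convergence the residual is controlled by $|\Omega_{\neq}|$ through the conjecture following \cref{P:unary-grad}.

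The main obstacle is precisely the term being discarded: evaluating $\partial\lambda/\partial f$ exactly would mean backtracking the dynamic-programming recursions and the recursive subdivision inside \DMM, which is memory- and time-prohibitive and is the very reason the unrolled subgradient is not computed in full. The argument is therefore less a calculation than a justification that this omission is admissible — that it acts only through the typically small disagreement set $\Omega_{\neq}$, leaving $\delta(x^*)-\delta(\bar x^1)$ a faithful approximate subgradient wherever the two chain solutions already agree.
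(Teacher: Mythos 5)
Your proposal is correct and follows essentially the same route as the paper: you re-derive the decomposition of \cref{P:unary-grad} (the paper's ``fix the minimizers and differentiate'' argument is exactly your envelope-theorem step), identify the explicit part as $\delta(x^*)-\delta(\bar x^1)$ and the implicit part as the sub-Jacobian sum supported on $\Omega_{\neq}$, and justify dropping the latter by the smallness of $\Omega_{\neq}$ and the intractability of backtracking $\partial\lambda/\partial f$ through the dynamic programming. Your added observation that the correction vanishes identically when $\bar x^1=\bar x^2$ everywhere (i.e.\ at a converged, consistent dual point) is a slightly more explicit statement of the paper's own conjecture and agreement argument, not a different method.
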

%as if %$\lambda$ was optimal via~\eqref{eq:crf-grad-exact} for $x_i = x^1_i = x^2_i$.
%
%then when considering differentiating~\eqref{LP-SSVM-bound} in $\theta$, we need also the implicit
%Considering the decomposition $\bar f = \bar f^{1} + \bar f^{2}$.
%The direct solution of~\eqref{eq:crf} is intractable, but using the decomposition lower bound~\eqref{eq:crf-dual} in~\eqref{eq:loss-augmented} we obtain the following upper bound on the loss:
%applying the Lagrangian decomposition technique, to~\eqref{eq:loss-augmented}, we obtain a {\em relaxed hinge loss}
%\begin{align}\label{LP-SSVM-bound}
%f(x^*) -\big[\min_{x^1}(\bar f^{1} + \lambda)(x^1) + \min_{x^2}(\bar f^{2} - \lambda)(x^2)\big],
%\end{align}
\par
%The assumption is that contribution from pixels not in agreement may be ignored.
The assumption for the learning to succeed is to eventually have most of the pixels in agreement. %Clearly, the inference method works towards this as from complementary slackness we have $\lambda_i (x^1_i - x^2_i) = 0$ for the optimal solution.
%\revisit[check complementary slackness condition]
The inference method works towards this by adjusting $\lambda$ such that the constraints $x^1_i=x^2_i$ are satisfied. We may expect in practice that if the data is not too ambiguous this constraint will be met for a large number of pixels already after a fixed number of iterations. %as from complementary slackness we have
A good initialization of unary costs, such as those learned using the pixel-wise only method can help to improve the initial agreement and to stabilize the method.
%
%The assumption is that for a sufficiently small step, the change of $\lambda$ is also small so that the minimizers stay at the same place.
%In effect we are optimizing an upper bound on the loss given by $l^{\rm bound}(f) = \DMM(f)$, non-linear in $f$. This is similar in principle to back-propagating a gradient through unrolled (differentiable) inference. %This issue actually persists and might be more noticable even if we performed blended learning and inference.
%\par
%An advantage, compared to blended inference is that we are training for the performance of the truncated inference which is what is going to be used at test time.

\subsection{Training Unary and Pairwise CNNs in Joint Model}
\label{sec:jointTraining}
To make the pairwise interactions trainable, we need to compute a  subgradient \wrt $\w_{ij}$, $P_1$, $P_2$. %The gradient \wrt $w_{ij}$ in \eqref{eq:pairwiseTerm}
We will compute it similarly to the unary terms assuming exact inference, and then just replace the exact minimizer $\bar x$ with an approximate $\bar x^1$. A subgradient of~\eqref{hinge-loss-ssvm-c} is obtained by choosing a minimizer $\bar x$ and evaluating the gradient of the minimized expression. Components of the later are given by
%\wrt $w_{ij}$, $P_1$, $P_2$ can be chosen as
\begin{subequations}
\begin{align}
& \textstyle \frac{\partial}{\partial w_{ij}} = \rho(|x_i^*{-}x_j^*|;P_{1,2}) - \rho(|\bar x_i - \bar x_j|;P_{1,2}),\\
%\end{equation}
 %Since, we want to get rid of hand-tuning the hyperparameters $P^1$ and $P^2$, we compute the gradient for these two parameters as well and get
%The gradient \wrt the parameters $P_1$ and $P_2$ is given by, respectively:
%\begin{equation}
& \textstyle \frac{\partial}{\partial P_{1}} = \sum_{ij} w_{ij}( \leftbb|x_i^*{-}x_j^*| = 1\rightbb - \leftbb|\bar{x}_i{-}\bar{x}_j| = 1\rightbb),\\
%\end{equation}
%and
%\begin{equation}
& \textstyle \frac{\partial}{\partial P_{2}} = \sum_{ij} w_{ij}(\leftbb|x_i^*{-}x_j^*| > 1\rightbb - \leftbb|\bar{x}_i{-}\bar{x}_j| > 1\rightbb).
\end{align}
\end{subequations}
%By doing this we get a full trainable model without any hand-crafted parameter, where everything is trained end-to-end.
We thus obtain an end-to-end trainable model without any hand-crafted parameters, except for the hyper-parameters controlling the training itself.
%
%
%We will detail how to do full joint training. Consider the following setting: Given a training sample of the input images $I^0$ and $I^1$ and the true disparity $y^*$. Additionally, we have a loss function $l(\hat{y}, y^*)$ measuring how well the predicted solution $\hat{y}$ conforms with the ground-truth disparity map $y^*$. Now we can formulate the learning-problem as the following bi-level optimization:
%\begin{eqnarray}
%&&\min_\theta l(\hat{y}, y^*) \\
%&&\text{s.t. }\hat{y} \in \arg \min_{y \in \mathcal{Y}} E(y, \theta)
%\end{eqnarray}
%For the stereo-problem we consider $l(y,y^*) = \max(|y - y^*|, \tau)$, hereafter referred to as \textit{application loss}.

% !TEX root = ../main.tex

\subsection{Implementation Details}
We trained our models using Theano \cite{Bergstra2010} with stochastic gradient descent and momentum. For training the model without pairwise costs we set the learn rate to 1\e{-2}, for all other models we set the learn rate to 1\e{-6}. Before feeding a sample into our model we normalize it such that it has zero-mean and unit-variance. We additionally correct the rectification for Middlebury samples. Our full model is trained gradually. We start by training the models with lower complexity and continue by training more complex models, where we reuse previously trained parameters and initialize new parameters randomly. Since we use full RGB images for training, we have to take care of occlusions as well as invalid pixels, which we mask out during training.
Additionally, we implemented the forward pass using C++/CUDA in order to make use of our trained models in a real-time environment in a streaming setting.
We achieve 3-4 frames per second with our fully trained 3-layer model using an input-size of $640 \times 480$ pixels\footnote{A detailed breakdown of the timings can be found in the supplementary material.}.

% !TEX root = ../main.tex

\section{Experiments}
% \begin{itemize}
% \item Learned edges  vs fixed edges
% \item why it is difficult or almost impossible to learn edges from Kitti
% \item our model is robust also with a more aggressive metric at kitti ($<2$px)
% \end{itemize}
In this section we test different variants of our proposed method. In order not to confuse the reader, we use the following naming convention: {\em CNNx} is the $\argmax$ output of a network trained as described in \cref{subsec:training_unary}; {\em CNNx+CRF} is the same network with \DMM as post-processing; {\em CNNx+CRF+Joint} is the jointly trained network described in \cref{subsec:training_joint} and {\em CNNx+CRF+Joint+PW} is the fully trained method described in \cref{sec:jointTraining}. $x$ represents the number of layers in the CNN.

\subsection{Benchmark Data Sets}\label{subsec:benchmark_datasets}
We use two stereo benchmark datasets for our experiments: Kitti 2015~\cite{Menze2015} and Middlebury V3~\cite{Scharstein2014}. Both benchmarks hold out the {\bf test} set, where the ground truth is not accessible to authors. We call examples with ground truth available that can be used for training/validation the {\bf design} set and split it randomly into 80\% {\bf training} set and 20\% {\bf validation} set. This way we obtain $160+40$ examples for Kitti and $122+31$ examples for Middlebury (including additionally provided images with different lightings, exposures and perfectly/imperfectly rectified stereo-pairs). The used error metric in all experiments is the percent of pixels with a disparity difference above $x$ pixels ({\em badx}). 
%An evolution of the performance during training can be found in \cref{fig:training-validation} in the supplementary material.

% The standard error metric for Kitti 2015 is the percent of non-occluded pixels with a disparity difference bigger 3 pixels (bad3), for Middlebury the percent of non-occluded pixels with a disparity difference bigger 4 pixels (bad4). {\gray With quarter size images of Middlebury we apply a proportional measure, bad1.}
%For training of our networks we shuffled all training images and used a train/validation split of 80\%/20\%. For Kitti we used the provided training data as is. For Middlebury we used all data where ground-truth is available. This contains the Middlebury training set as well the additionally provided images with different lightings, exposures and perfect/imperfect rectified stereo-pairs.

\subsection{Performance of Individual Components}\label{sec:components-influence}
% \inlinetodo{Explain that we don't have access to the test set or to validation sets of other papers.}
%In this experiment we investigate the influence of the individual components of our model. % on the performance on various datasets.  %For all training data, where the ground truth results were available,
%In this experiment we investigate the influence of CRF in our model and.
%In this experiment we compare our CNN only model with~\cite{Luo2016,Zbontar2016} without post-processing and also investigate the influence of connecting CRF optimization on top of all these models.
In this experiment we measure the performance improvement when going from {\em CNNx} to the full jointly trained model.
%For the purpose of comparison we include bare networks~\cite{Luo2016,Zbontar2016} without post-processing and with our CRF model instead of post-processing (parameters $P_1,P_1,\alpha,\beta$ are fit individually).
%We tested our different methods with $3$ and $7$ layers respectively and show the quantitative results for both benchmarks in \cref{tab:evalAll}.
Since ground-truth of the test data is not available to us, this comparison is conducted on the complete design set.
The results are shown in~\cref{tab:evalAll}.
% For the purpose of comparison the results include the bare networks of~\cite{Luo2016,Zbontar2016} without their engineered post-processing but with our CRF model on top.
 This experiment demonstrates that an optimization or post-processing is necessary, since the direct output of all tested CNNs (after a simple point-wise minimum search in the cost volume) contains too many outliers to be used directly. A qualitative comparison on one of the training images of Middlebury is depicted in \cref{fig:middleburryQualitative}. One can observe that the quality of the CNN-only method largely depends on the number of layers, whereas the CNN+CRF versions achieve good results even for a shallow CNN.
\cref{tbl:online} additionally shows the error metrics {\em bad\{2,3,4\}} on the design set of Kitti, because these error metrics cannot be found online. 
 %On Kitti we additionally evaluated the error metrics {\em bad\{2,3,4\}} on the design set, because these metrics cannot be found online, see \cref{tbl:online}.

% \subsection{Comparison Without post-Processing}\label{sec:comparison-cnns}
% In this experiment we compare performance of our CNN-only model with bare networks~\cite{Luo2016}, \cite[fast]{Zbontar2016} without post-processing and also investigate the influence of connecting CRF optimization on top of all these models. For the later parameters $P_1,P_1,\alpha,\beta$ are fit individually to each network using grid search. The evaluation is performed on the complete design set.
% This experiment shows that CRF is useful in improving performance across several variants of networks and moreover the difference in the performance of combined CNN+CRF models becomes much less significant than in pure CNN models.
% Joint training of our CNN+CRF model improves the performance further, despite we use a relatively shallow network, having fewer parameters. Specifically, our full joint model with 7 layers has 281k parameters, while networks~\cite{Luo2016,Zbontar2016} have about 700k and 900k parameters, respectively.

\subsection{Benefits of Joint Training}\label{sec:comparison-cnns}
%with a reduced number of trainable parameters compared to \cite{Luo2016,Zbontar2016} ($\approx$ 281k for our fully trained model with 7 layers vs. $\approx$ 700k and $\approx$ 900k for the competing methods).

%measure the effect of post-processing
%In a second experiment, we compare to two recently proposed stereo matching methods based on CNNs, {\em MC-CNN} by Zbontar and LeCun~\cite{Zbontar2016} and {\em Content-CNN} by \citet{Luo2016}. In order to allow for a fair comparison among the methods, we disable all post-processing steps of \cite{Luo2016,Zbontar2016}.
%We then unify the post-processing step by adding our CRF on top of the CNN outputs. We evaluate on the whole training set since we do not know the train/test split of the different methods.
%This setup ensures that we compare the raw matching performance of the different learning methods.
%In favor of the compared methods, we individually tune the parameters of our CRF method for each method using grid search.

In this experiment, we compare our method to two recently proposed stereo matching methods based on CNNs, the {\em MC-CNN} by Zbontar and LeCun~\cite{Zbontar2016} and the {\em Content-CNN} by \citet{Luo2016}. To allow a fair comparison of the methods, we disable all engineered post-processing steps of \cite{Luo2016,Zbontar2016}. We then unify the post-processing step by adding our CRF on top of the CNN outputs. We evaluate on the whole design set since we do not know the train/test split of the different methods.
%This setup ensures that we compare the raw matching performance of the different learning methods.
In favor of the compared methods, we individually tune the parameters $P_1,P_2,\alpha,\beta$ of the CRF for each method using grid search.
The results are shown in \cref{tab:evalAll}. While the raw output of our CNN is inferior to the compared methods, the post-processing with a CRF significantly decreases the difference in performance.
Joint training of our CNN+CRF model further improves the performance, despite using a relatively shallow network with fewer parameters. Specifically, our full joint model with 7 layers has 281k parameters, while the networks~\cite{Luo2016,Zbontar2016} have about 700k and 830k parameters, respectively.
%We can show that our full model which is aware of the CRF during training, further improves on the performance with a reduced number of trainable parameters compared to \cite{Luo2016,Zbontar2016} ($\approx$ 281k for our fully trained model with 7 layers vs. $\approx$ 700k and $\approx$ 900k for the competing methods).

\begin{figure}[t]
\centering
\vspace{-5pt}
%\adjustbox{valign=t}{%
%\begin{minipage}{0.425\linewidth}
  %\labelgraphics[width=\linewidth]{../figures/pretty_images/middlebury_cropped/Adirondack_in.png}{Input}
%\end{minipage}}%
%\adjustbox{valign=t}{%
%\begin{minipage}{0.575\linewidth}
  %\labelgraphics[width=\linewidth]{../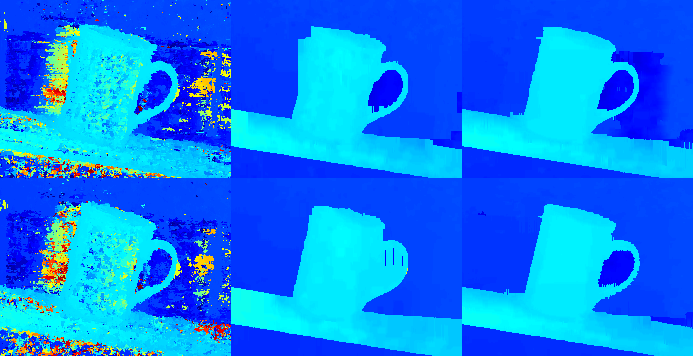}{Disparity}\\
  %%\includegraphics[width=\linewidth]{../}
%\end{minipage}}%
\begin{tabular}{c}
\small Input \\
\includegraphics[width=0.425\linewidth]{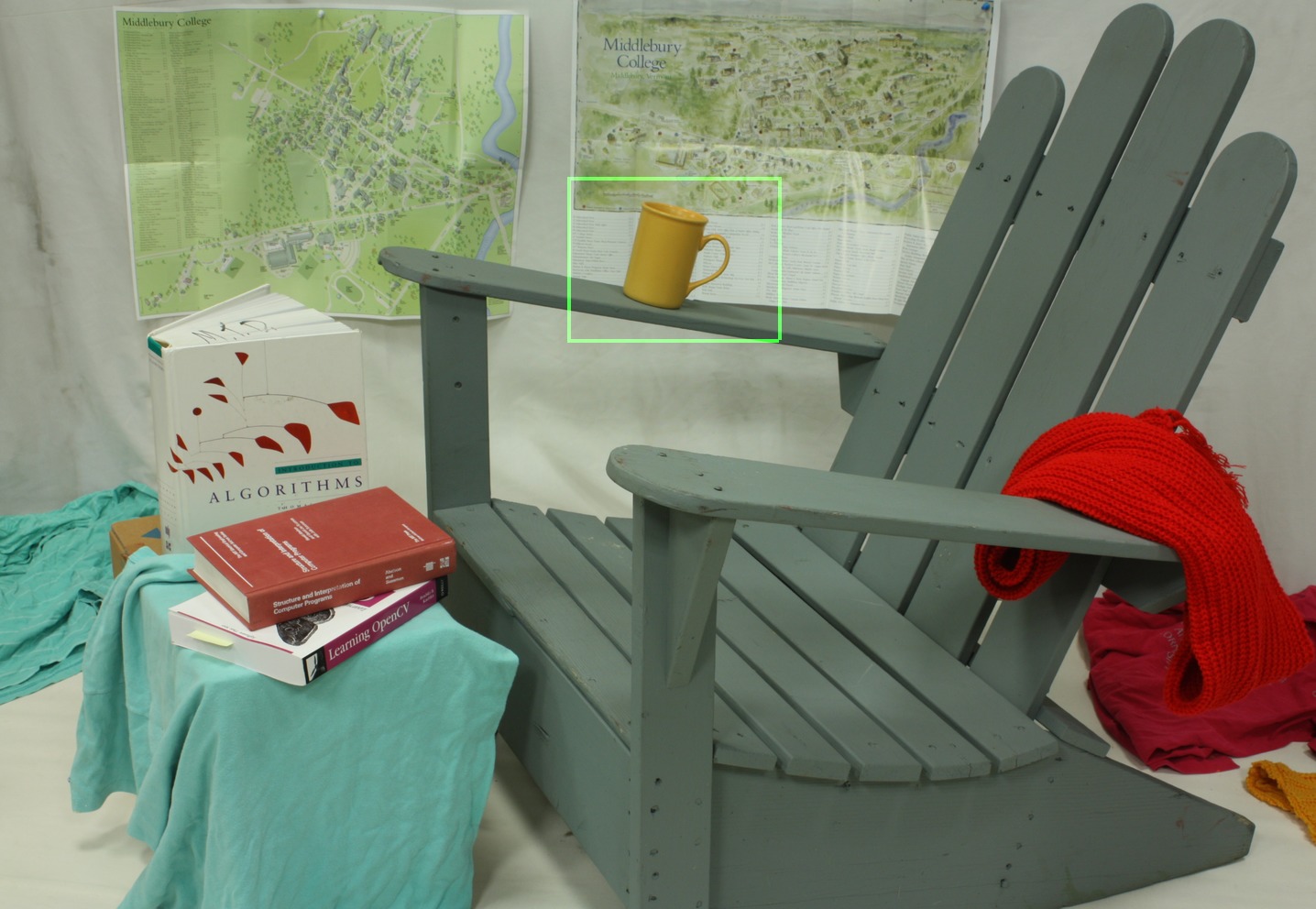}
\end{tabular}%
\begin{tabular}{p{0.575\linewidth}}
\begin{tabular}{C{0.33\linewidth}C{0.33\linewidth}C{0.33\linewidth}}
\small CNN & \small +CRF & \small +Joint+PW
\end{tabular}\\
\includegraphics[width=\linewidth]{../figures/pretty_images/middlebury_cropped/Adirondack_CRF_VS_FULL_depth.png}
\end{tabular}\\
\caption{Qualitative comparison of \unarycnn, CNN+CRF and CNN+CRF+Joint+PW on the Middlebury benchmark. Zoom-in of disparity with 3 layers (top) and 7 layers (bottom). Note how the jointly trained models inpaint occlusions correctly.%The columns show the result of the CNN only (left), CNN+CRF (middle) and CNN PW+Joint (right).
%The rows show the result using 3 (first row) and 7 (second row) layers in the CNN. In the last two rows the corresponding error images for the {\em bad4} error measure are depicted.
}
\label{fig:middleburryQualitative}
\end{figure}

\begin{table}[t]
\centering
\setlength{\tabcolsep}{3pt}
\small
\begin{tabular}{llcccc}
\toprule
\textbf{Benchmark}& \textbf{Method} & \textbf{CNN} & \textbf{+CRF} & \textbf{+Joint} & \textbf{+PW} \\
\midrule
\multirow{2}{*}{Middlebury}& CNN3 & 23.89 & 11.18 & 9.48 & 9.45 \\
                           & CNN7 & 18.58 & 9.35  & 8.05 & 7.88 \\ \midrule
% \multirow{4}{*}{Kitti 2012}& CNN3 & 32.36 & 10.40 & 9.38 & 6.43 \\
%                            & CNN7 & 15.05 & 4.36  & 4.33 & 3.91 \\
%               &    \cite{Luo2016} & 8.92  & 5.01  & -    & -    \\
%               &\cite{Zbontar2016} & 14.39 & 3.52  & -    & -    \\\midrule
\multirow{4}{*}{Kitti 2015}& CNN3 & 28.38 & 6.33  & 6.11 & 4.75 \\
                           & CNN7 & 13.08 & 4.79  & 4.60 & 4.04 \\
              &    \cite{Luo2016} & 5.99  & 4.31  & -    & -    \\
              &\cite{Zbontar2016} & 13.56 & 4.45  & -    & -    \\ \bottomrule
\end{tabular}
\caption{Influence of the individual components of our method (\cref{sec:components-influence}) and comparison with~\cite{Luo2016,Zbontar2016} without post-processing (\cref{sec:comparison-cnns}). %on Middlebury Stereo V3 and Kitti 2015.
%The numbers reported correspond to the respective standard error metric on the non-occluded pixels of the training sets (please see text for details).
Standard error metrics ({\em bad4} on official training data for Middlebury and {\em bad3} on the {\bf design} set for Kitti) are reported.
}
\label{tab:evalAll}
\end{table}

%\subsection{Comparison to State-of-the-Art}

\begin{figure*}[t]
\centering
\vspace{-5pt}
%\begin{tabu}{p{0.7\linewidth}c}
%Middlebury & Kitti 2015\\
%\footnotesize
\resizebox{0.75\linewidth}{!}{
\begin{minipage}{0.68\textwidth}
\centering
{\normalsize Middlebury}\\
\setlength{\tabcolsep}{1.5pt}
\footnotesize
\begin{tabu}{lc|c|*{15}{c}|c}
\toprule
%\rowfont{\bfseries}
{\bf Method} & \sw{\bf \begin{tabular}{l} Average\\ performance\end{tabular}} &  \sw{\bf \begin{tabular}{l} Time [sec] \end{tabular}} & \sw{Australia} & \sw{AustraliaP} & \sw{Bicycle2} & \sw{Classroom2} & \sw{Classroom2E} & \sw{Computer} & \sw{Crusade} & \sw{CrusadeP} & \sw{Djembe} & \sw{DjembeL} & \sw{Hoops} & \sw{Livingroom} & \sw{Newkuba} & \sw{Plants} & \sw{Staircase} & \sw{\bf Metric}\\
\midrule
\cite{Zbontar2016} fst & 22.4	&	\textbf{1.69} & 22.0	&	20.3	&	12.7	&	28.8	&	42.6	&	9.82	&	28.7	&	25.1	&	5.07	&	32.0	&	23.3	&	16.5	&	30.6	&	25.5	&	34.1 &	\\
\cite{Zbontar2016} acc. & 21.3	&	150 & 20.8	&	19.6	&	9.6	&	28.6 &	67.4	&	7.67	&	23.2	&	15.7	&	8.49	&	31.8	&	\textbf{16.7}	&	13.9	&	38.8	&	18.7	&	28.6 & \multirow{4}{*}{\sw{RMS}}	\\
 \cite{Barron2016} & 15.0	&	188 & 18.4	&	18.1	&	\textbf{8.72}	&	\textbf{9.06}  &	19.9	&	\textbf{6.52}	&	24.2	&	25.7	&	\textbf{3.91}	&	\textbf{12.7}	&	24.7	&	\textbf{9.58}	&	\textbf{17.9}	&	\textbf{17.5} & 	\textbf{17.9} &	\\
 %\cite{Drouyer} & 16.8	&	164$^*$ & 18.7	&	18.1	&	9.19	&	31.5  &	46.4	&	6.09	&	23.7	&	13.8 &	3.04	&	8.00	&	14.8	&	9.83	&	22.5	&	18.6 & 	18.6	\\
Ours & \textbf{14.4} & 4.46 & \textbf{15.9} & \textbf{16.2} & 10.7 & 10.3 & \textbf{11.2} & 14.0 & \textbf{13.7} & \textbf{13.1} & 4.11 & 14.3 & 19.2 & 11.9 & 22.5 & 20.6 & 25.5 & \\
\midrule
% \rowfont{\fontsize{6}{4}}
% \cite{Zbontar2016} (2) & 8.29	&	5.59	&	4.55	&	5.96	&	{\bf2.83}	&	11.4	&	8.44	&	8.32	&	8.89	&	2.71	&	16.3	&	14.1	&	13.2	&	13.0	&	6.40	&	11.1	\\
% \cite{ZhangLiChengEtAl2015} (7) & 13.4	&	5.90	&	4.88	&	10.8	&	12.9	&	10.6	&	13.6	&	12.2	&	9.01	&	5.39	&	27.4	&	23.5	&	17.7	&	21.0	&	15.4	&	20.9	\\
% OURS (11) & 16.9	&	28.3	&	5.01	&	6.77	&	21.4	&	38.8	&	{\bf7.70}	&	27.6	&	21.8	&	5.27	&	27.6	&	21.6	&	19.4	&	19.3	&	12.2	&	13.9	\\
\cite{Zbontar2016} fst & 9.47	&	\textbf{1.69} & 7.35	&	5.07	&	7.18	&	4.71	&	16.8	&	8.47	&	\textbf{7.37}	&	\textbf{6.97}	&	2.82	&	20.7	&	17.4	&	15.4	&	15.1	&	7.9	&	12.6 &	\\
\cite{Zbontar2016} acc. & \textbf{8.29}	&	150 & 5.59	&	4.55	&	\textbf{5.96}	&	\textbf{2.83}	&	11.4	&	\textbf{8.44}	&	8.32	&	8.89	&	\textbf{2.71}	&	16.3	&	\textbf{14.1}	&	13.2	&	\textbf{13.0}	&	\textbf{6.40}	&	11.1 & \multirow{4}{*}{\sw{bad2}}	\\
 \cite{Barron2016} & 8.62	&	188 & 6.05	&	5.16	&	6.24	&	3.27  &	\textbf{11.1}	&	8.91	&	8.87	&	9.83	&	3.21	&	\textbf{15.1}	&	15.9	&	\textbf{12.8}	&	13.5	&	7.04 & 	\textbf{9.99} &	\\
% \cite{Drouyer} & 6.35	&	164$^*$ & 5.45	&	4.45	&	6.8	&	3.46  &	10.7	&	6.05	&	5.01	&	5.19	&	2.62	&	10.8	&	9.62	&	6.59	&	11.4	&	6.01 & 	7.04	\\
 %\cite{ZhangLiChengEtAl2015} & 13.2	&	65.3 & 5.90	&	4.88	&	10.8	&	12.9	&	10.6	&	13.6	&	12.2	&	9.01	&	5.39	&	27.4	&	23.5	&	17.7	&	21.0	&	15.4	&	20.9	\\
 %OURS (11) & 16.9	&	28.3	&	5.01	&	6.77	&	21.4	&	38.8	&	{\bf7.70}	&	27.6	&	21.8	&	5.27	&	27.6	&	21.6	&	19.4	&	19.3	&	12.2	&	13.9	\\
%Ours (old) & 16.5 & 27.9 & 4.79 & 8.62 & 20.2 & 37.6 & 15.1 & 24.2 & 17.4 & 4.47 & 19.2 & 22.0 & 20.0 & 21.1 & 9.43 & 16.2 \\
Ours & 12.5 & 4.46 & \textbf{4.09} & \textbf{3.97} & 8.44 & 6.93 & \textbf{11.1} & 13.8 & 19.5 & 19.0 & 3.66 & 17.0 & 18.2 & 18.0 & 21.0 & 7.29 & 17.8 & \\
\bottomrule
\end{tabu}
\end{minipage}}%\
%& \ \ \ \ \
%\begin{minipage}{0.3\textwidth}
%\centering
{\small
\begin{tabular}{c}
{\normalsize Kitti 2015}\\
\setlength{\tabcolsep}{1.5pt}
\begin{tabular}{lccc}
\toprule
% {\bf Error} & {\bf D1-bg} & {\bf D1-fg} & {\bf D1-all}\\ \midrule
% All / All & 4.35 \% & 11.25 \% & \textbf{5.50 \%}\\
% All / Est & 4.29 \% & 10.85 \% & 5.37 \%\\
% Noc / All & 3.82 \% & 10.02 \% & 4.84 \%\\
% Noc / Est & 3.79 \% & 9.99 \% & 4.82 \%\\
{\bf Method} & {\bf Non-occ} & {\bf All} & {\bf Time}\\ \midrule
\cite{Mayer_2016_CVPR} & 4.32 & 4.34 & \textbf{0.06}s \\
\cite{Luo2016} & 4.00 & 4.54 & 1s\\
\cite{Zbontar2016} acc. & 3.33 & 3.89 & 67s\\
\cite{Seki2016} & \textbf{2.58} & \textbf{3.61} & 68s\\
Ours & 4.84 & 5.50 & 1.3s\\
\bottomrule
\toprule
{\bf Train err.} & $\textbf{bad2}$ & $\textbf{bad3}$ & $\textbf{bad4}$ \\
\midrule
%Ours XY (occ)                    & 6.19 & 4.46 & 3.56 & 2.98 \\
% \cite{Luo2016} + CRF occ     &  8.96 & 5.89 & 4.72 & 4.12 \\
% \cite{Zbontar2016} + CRF occ & 12.94 & 6.10 & 4.59 & 4.02 \\
\cite{Luo2016}\footnotemark[3]      &  7.39 & 4.31 & 3.14 \\
\cite{Zbontar2016}\footnotemark[3]   &  11.4 & 4.45 & \textbf{2.93} \\
Ours                     &  \textbf{6.01} & \textbf{4.04} & 3.15 \\
\bottomrule
\end{tabular}
\end{tabular}
}
%\end{minipage}
%\end{tabu}
\\
\captionof{table}{%{\bf (a)} Comparison of different methods on the
Performance in benchmark {\bf test} sets as of time of submission. For both benchmarks, we compare our results against work that is based on CNNs for matching costs and accepted for publication.
% \cite{Barron2016,Luo2016,Zbontar2016,ZhangLiChengEtAl2015}.
We report the respective standard error metric {\em bad2} for the Middlebury- and {\em bad3} for the Kitti benchmark. The bottom table for Kitti shows a comparison of the training error with different error metrics {\em badx}. %$^*$estimated time for H(alf) size evaluation.
% Middlebury: average and per-instance performance in comparison with published methods based on CNNs~~\cite{Barron2016,Zbontar2016,ZhangLiChengEtAl2015} (for a complete comparison, we refer to the online submission system). %We report the described error measures on non-occluded pixels of the 16 {\em test} images. We used the half-sized images for all our experiments.
% The measure reported is the bad2 measure on non-occluded pixels in half-sized images.
% Kitti: benchmark measures in comparison with published methods based on CNNs~\cite{Luo2016,Zbontar2016} compared on all and non-occluded pixels.
\label{tbl:online}
}
\end{figure*}

\begin{figure}[ht]
\centering
\includegraphics[width=0.49\columnwidth]{../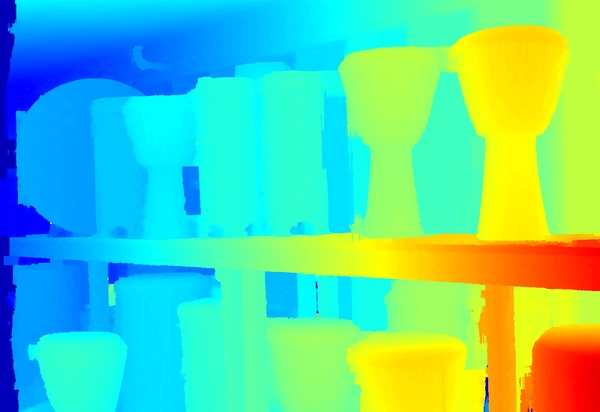}
\includegraphics[width=0.49\columnwidth]{../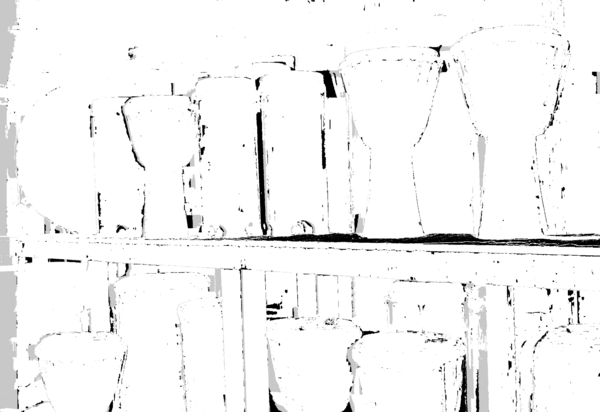}\\
\includegraphics[width=0.49\columnwidth]{../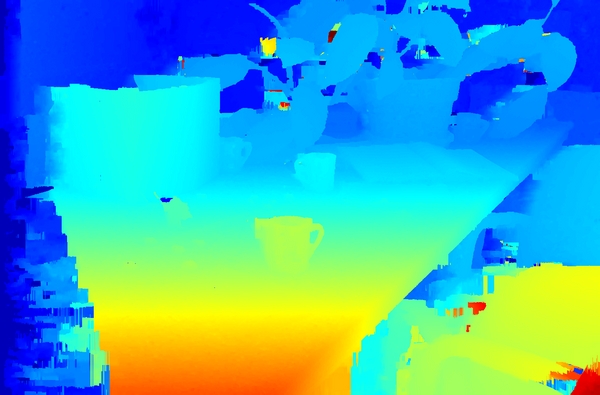}
\includegraphics[width=0.49\columnwidth]{../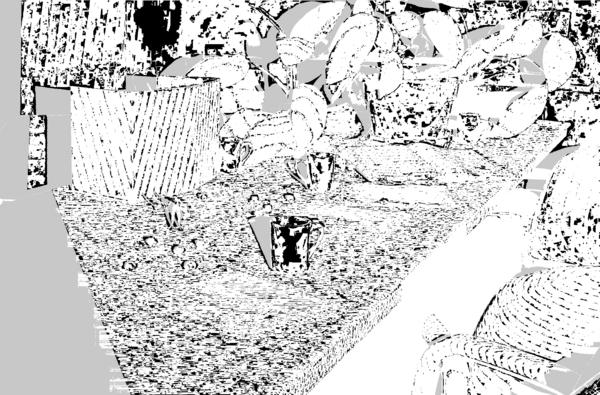}\\
\caption{Qualitative comparison on selected {\em test} images (from top to bottom: \textit{Djembe} and \textit{Crusade}) of the Middlebury Stereo Benchmark. The left column shows the generated disparity images in false color, the right column the bad2 error image, where white = error smaller than 2 disparities, grey = occlusion and black = error greater than 2 disparities.}
\label{fig:middleburyTestQualitative}
\end{figure}

\begin{figure}[t]
\centering
\includegraphics[width=0.66\columnwidth]{../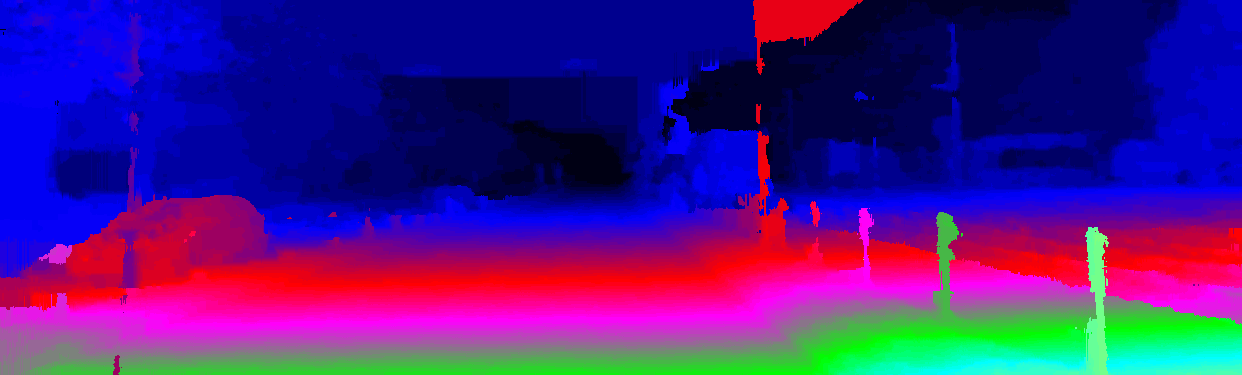}
\begin{minipage}[b][][c]{0.32\columnwidth}
\includegraphics[width=\columnwidth]{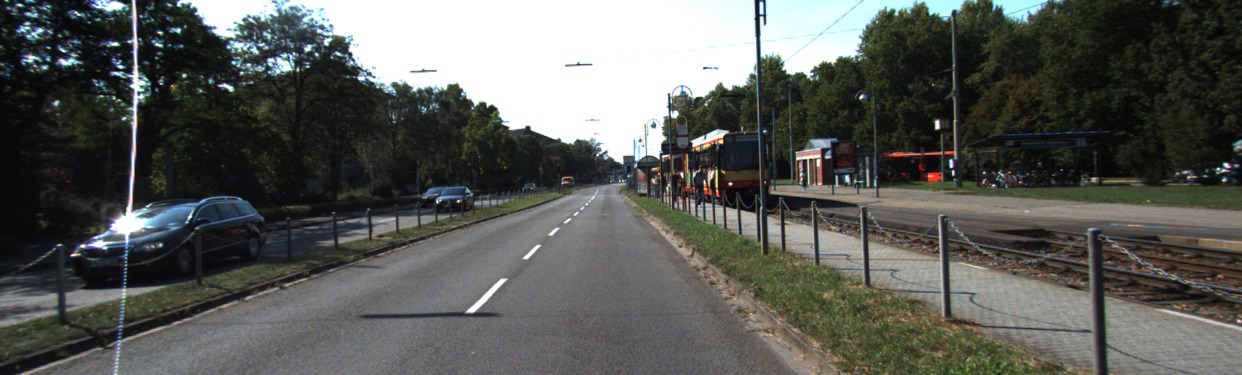}\\
\includegraphics[width=\columnwidth]{../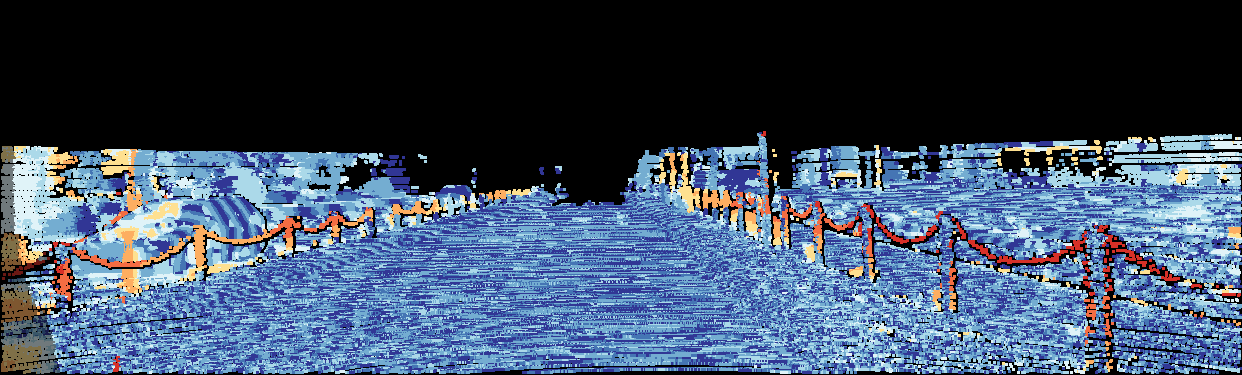}
\end{minipage}
\includegraphics[width=0.66\columnwidth]{../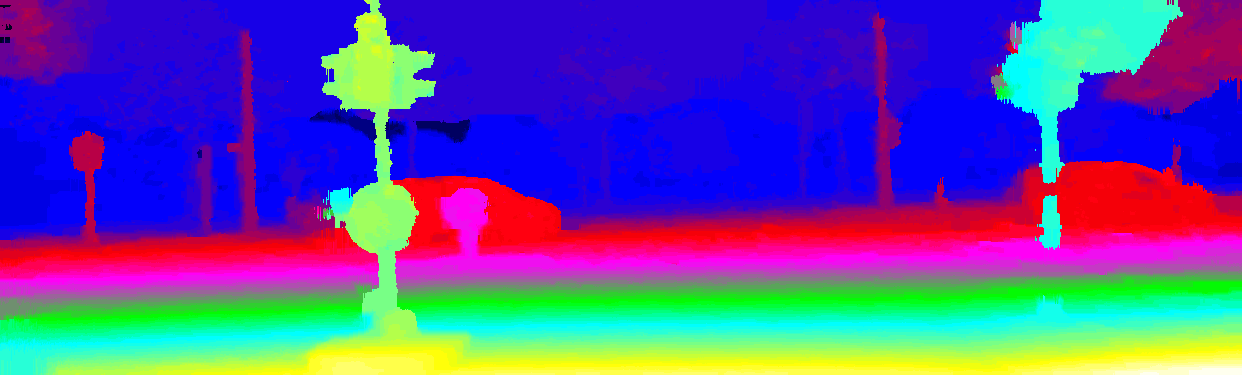}
\begin{minipage}[b][][c]{0.32\columnwidth}
\includegraphics[width=\columnwidth]{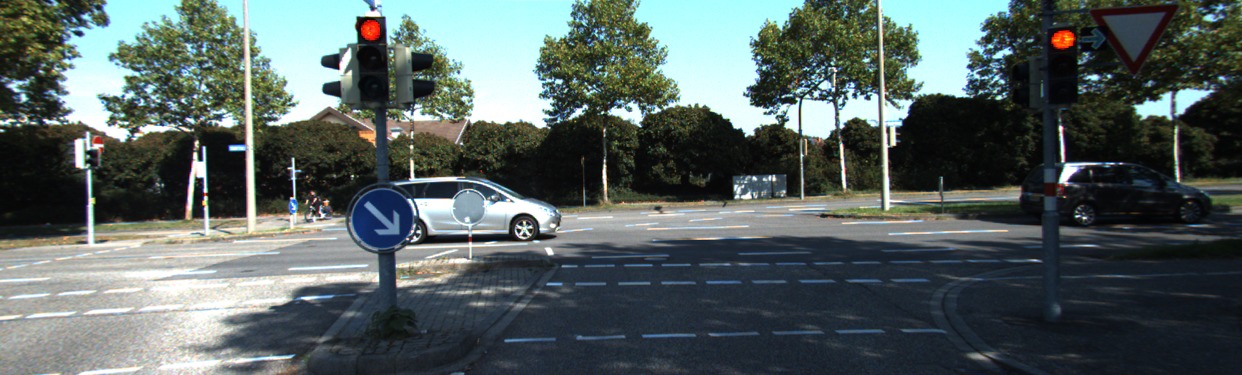}\\
\includegraphics[width=\columnwidth]{../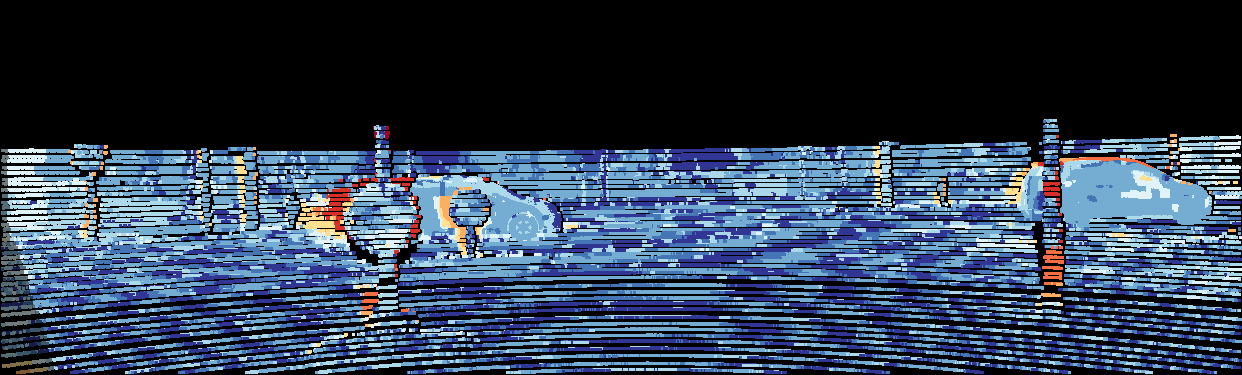}
\end{minipage}
% \includegraphics[width=0.66\columnwidth]{../figures/pretty_images/kitti/datasets-kitti-results-7f3899bebf53a2865e65749e35026052cd643c66-result_disp_img_0000011_10.png}
% \begin{minipage}[b][][c]{0.32\columnwidth}
% \includegraphics[width=\columnwidth]{../figures/pretty_images/kitti/datasets-kitti-results-7f3899bebf53a2865e65749e35026052cd643c66-image_0000011_10.png}\\
% \includegraphics[width=\columnwidth]{../figures/pretty_images/kitti/datasets-kitti-results-7f3899bebf53a2865e65749e35026052cd643c66-errors_disp_img_0000011_10.png}
% \end{minipage}
\caption{Qualitative comparison on the test set of Kitti 2015. Cold colors = error smaller than 3 disparities, warm colors = error larger than 3 disparities.}%  More results can be viewed online and in the supplementary material.}
\label{fig:kittiQualitative}
\end{figure}

\begin{figure}
\centering
\begin{tabular}{c@{\hskip 3pt}c@{\hskip 3pt}c@{\hskip 3pt}c@{\hskip 6pt}c@{\hskip 3pt}c@{\hskip 3pt}c@{\hskip 3pt}c}
%1&2&3&4&5&6&7&8
%\setlength{\tabcolsep}{12pt}
  \includegraphics[height=2.9cm]{../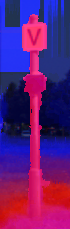} &
  \includegraphics[height=2.9cm]{../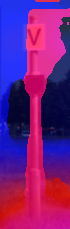} &
  \includegraphics[height=2.9cm]{../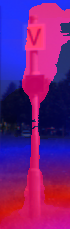} &
  \includegraphics[height=2.9cm]{../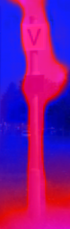} &
  \includegraphics[height=2.9cm]{../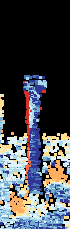} &
  \includegraphics[height=2.9cm]{../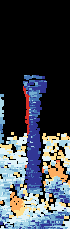} &
  \includegraphics[height=2.9cm]{../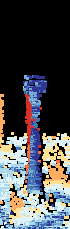} &  \includegraphics[height=2.9cm]{../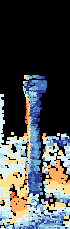} \\
  \vspace{-1pt}
  \small Ours & \small \cite{Zbontar2016} & \small \cite{Mayer_2016_CVPR} & \small \cite{Luo2016} & \small Ours & \small \cite{Zbontar2016} & \small \cite{Mayer_2016_CVPR} & \small \cite{Luo2016}
\end{tabular}
  \vspace{-2pt}
  \caption{Zoom-in comparison with state-of-the-art methods on a selected test image. Left images show an overlay of depth prediction and input image and right images show the corresponding error plots.}% Note how our method precisely follows object boundaries and how the competing methods tend to fatten foreground objects.}
  \label{fig:kittiZoom}
\end{figure}

\subsection{Benchmark Test Performance}
The complete evaluation of our submission on test images is available in the online suites of Middlebury~\cite{Scharstein2014} and Kitti 2015~\cite{Menze2015}.
The summary of this evaluation is presented in \cref{tbl:online}.
We want to stress that these results have been achieved without using any post-processing like occlusion detection and -inpainting or sub-pixel refinement.

We fine-tuned our best performing model (\cref{tab:evalAll}, CNN7+PW) for {\em half} sized images and used it for the Middlebury evaluation. \cref{tbl:online} shows the root mean squared (RMS) error metric and the {\em bad2} error metric for all test images. We achieve the lowest overall RMS error. Our {\em bad2} error is slightly worse compared to the other methods. These two results suggest our wrong counted disparities are just slightly beside. This behavior is shown in the error plot at the bottom in \cref{fig:middleburyTestQualitative}, where many small discretization artefacts are visible on slanted surfaces. Note that a sub-pixel refinement would remove most of this error. Additionally, we present an example where our algorithm achieves a very low error as in the majority of images.

For Kitti we use our best performing model (\cref{tab:evalAll}, CNN7+PW), including the $x$- and $y$-coordinates of the pixels as features. This is justified because the sky is always at the top of the image while the roads are always at the bottom for example.
The error plots for Kitti in \cref{fig:kittiQualitative} reveal that most of the incorrect predictions are in occluded areas. In \cref{fig:kittiZoom}  we show a qualitative comparison of magnified depth predictions of CNN-based methods on a Kitti test image. 
The depth overlays at the left side of the figure show how accurately the algorithms recover object boundaries and the images on the right side show the corresponding error plots provided by the evaluation system. Note, that very accurate predictions are partially treated as incorrect and how the competing methods tend to overfit to the fattened ground truth. Our approach works also very well in the upper third of the images, whereas the competing methods bleed out.
%This indicates that the ground-truth data is fattened. 
%hat the competing algorithms heavily tend to overfit on the training data, whereas our approach generalizes very well.

% shows that our algorithm works very well in cases where no ground-truth data is available (indicated by the black region in the error images), too. This is a very desirable and therefore beneficial property of our algorithm.
%The error-plots indicate the goals of future work: we want to model occlusions explicitly and try to learn them, such that we can fill them with reasonable information.
%We include additional experiments as well as visual comparisons with other methods in the supplementary material.

% !TEX root = ../main.tex

\section{Conclusion}
We have proposed a fully trainable hybrid CNN+CRF model for stereo and its joint training procedure. 
Instead of relying on various post-processing procedures we designed a clean model without post-processing, where each part has its own responsibility. 
Therefore we gain interpretability of what is learned in each component of the model. 
This gives the insight that using a well defined model decreases the number of parameters significantly while still achieving a competitive performance.
%Our model achieves competitive results on the Middlebury and Kitti stereo benchmarks. 
%In contrast to other works, our model does not use any post-processing. Despite not modeling occlusions, they are often filled in correctly in practice. 
We have shown that the joint training allows to learn unary costs as well as pairwise costs, while having the evidence that the increased generality always improves the performance.
% (see \cref{tab:evalAll}).
Our newly proposed trainable pairwise terms allow to delineate object boundaries more accurately.
For the SSVM training we detailed the approximation of a subgradient and have shown that our training procedure works experimentally. %with an approximate subgradient and detailed the approximation. % and were able to handle very large scale.
%This allows us to directly optimize for the final output. %without using any engineered methods for post-processing.
%We utilize CNNs for computing both, unary costs and pairwise costs, to be used in a CRF. 
%This architecture makes our model easily interpretable. 
%We have shown how this model can be trained using the SSVM formulation. 
%Our model achieves competitive results on the Middlebury and Kitti stereo benchmarks. In contrast to other works, our model has much fewer parameters and more importantly, does not use any post-processing. Despite not modeling occlusions, they are often filled in correctly in practice. 
%For future work we plan to model occlusions explicitly and try to learn them in order to further improve the performance of our model. In addition, it will be interesting to look into continuous labels to overcome discretization errors.%post-processing methods like sub-pixel enhancement that can be incorporated into the training process.
For future work we plan to introduce an additional occlusion label to our model to further improve the performance in occluded areas. In addition, it will be interesting to investigate a continuous label space~\cite{moellenhoff-laude-cvpr16} to improve the performance of the model on slanted surfaces. 
\footnotetext[3]{With our CRF as postprocessing}

\subsection*{Acknowledgements}
  \small
  This work was supported by the research initiative Intelligent Vision Austria with funding
  from the AIT and the Austrian Federal Ministry of Science, Research and Economy
  HRSM programme (BGBl. II Nr. 292/2012) and the ERC starting grant HOMOVIS, No. 640156.

{\small
\bibliographystyle{apa}
\bibliography{bib/strings,bib/main,bib/cnn-crf,bib/ssvm,bib/suppl,bib/dmm}
}

\clearpage
\newpage
\appendix
\setboolean{InAppendix}{true}
% !TEX root = ../main.tex

% Reset counters
\setcounter{figure}{0}
\setcounter{table}{0}
\counterwithin{figure}{section}
\counterwithin{table}{section}

% \title{\mytitle (CVPR'17 Appendix)}
\twocolumn[{%
 \centering
 \LARGE \mytitle \\ Supplementary Material \\[1.5em] 
 %(CVPR Submission \#\cvprPaperID\ Appendix) \\[1.5em]
 %\large First Author\Mark{1},
 %       Second Author\Mark{2},
 %       Third Author\Mark{1},
 %       Fourth Author\Mark{2}
 %   and Fifth Author\Mark{3}\\[1em]
 \normalsize
 %\begin{tabular}{*{2}{>{\centering}p{.25\textwidth}}}
  %\Mark{1}Department1 & \Mark{2}Department2 \tabularnewline
  %School1 & School2  \tabularnewline
  %\url{email1} & \url{email2}
 %\end{tabular}\\[3em] % some more space after the title part
}]

%\author{}

% \maketitle
%\setcounter{page}{1}
% TOC
% remove dots and page numbers
\let\Contentsline\contentsline
\renewcommand\contentsline[3]{\Contentsline{#1}{#2}{}}
\makeatletter
\renewcommand{\@dotsep}{10000}
\makeatother
%\subsection*{Supplementary Contents}
%\startcontents[sections]
%\printcontents[sections]{l}{1}{\setcounter{tocdepth}{2}}

\section{Technical Details}\label{sec:suppl_detail}
\subsection{Approximate Subgradient}
Here we proof~\cref{P:unary-grad}, restated below for convenience.
\PUnaryGrad*
\begin{proof}
The loss upper bound~\eqref{LP-SSVM-bound} involves the minimum over $x^1$, $x^2$ as well as many minima inside the dynamic programming defining $\lambda$. A subgradient can be obtained by fixing particular minimizers in all these steps and evaluating the gradient of the resulting function.
It follows that a subgradient of the point-wise minimum of $(\bar f^1 + \lambda) (x^1) + (\bar f^2 - \lambda) (x^2)$ over $x^1,x^2$ can be chosen as $g =$
\begin{align}\label{subgrad-unary-suppl}
\nabla_{f_\V} (\bar f^1(\bar x^1) + \bar f^2(\bar x^2)) + \nabla_{\lambda} (\lambda (\bar x^1) - \lambda (\bar x^2)) J,
\end{align}
where $J_{i,j}(k,l)$ is a sub-Jacobian matching $\frac{d \lambda_j(l)}{d f_i(k)}$ for the directions $d f_\V$ such that $\lambda(f+df_\V)$ has the same minimizers inside dynamic programming as $\lambda(f)$.
\par
In the first part of the expression~\eqref{subgrad-unary-suppl}, the pairwise components and the loss $l(\bar x^1,x^*)$ do not depend on $f_i$ and may be dropped, leaving only $(\nabla_{f_\V} \sum_{j\in\V} f_{j}(\bar x^1_{j}))_i  = \delta(\bar x^1)_i$.
\par
Let $h$ denote the second expression in~\eqref{subgrad-unary-suppl}. Its component $h_i(k)$ expands as
%The $i$-th component of the second expression in~\eqref{subgrad-unary-suppl} expands as
\begin{subequations}
\begin{align}
h_i(k) = & \sum_{j\in\V}\sum_{l\in\L} \frac{\partial}{\partial \lambda_j(l)}(\lambda_{j} (\bar x^1_j) - \lambda_{j} (\bar x^2_j)) J_{ij}(k,l)\\
= & \sum_{j \in \Omega_{{\neq}}} \sum_{l\in\L}(\leftbb \bar x^1_j{=}l\rightbb - \leftbb \bar x^2_j=l\rightbb ) J_{ij}(k,l)\\
\label{neglected-sum}
= & \sum_{j \in \Omega_{{\neq}}} (J_{ij}(k,x^1_j) - J_{ij}(k,x^2_j)).
\end{align}
\end{subequations}
\qed
\end{proof}
Our intuition to neglect the sum~\eqref{neglected-sum} is as follows. We expect that variation of $f_{i}$ for a pixel $i$ far enough from $j \in \Omega_{\neq}$ will not have a significant effect on $\lambda_j$ and thus $J_{ij}$ will be small over $\Omega_{\neq}$. %Since the algorithm is spreading the cost $f_i$ uniformly over a subset of pixels that found to be correlated

\section{Training insights}
We train our full joint model gradually as explained in \cref{sec:training} in the main paper. To give more insights on how the joint training evolves until we get our final parameters, we show a training plot in \cref{fig:training-validation}. 
This plot shows the evolution of the {\em bad4} error on the Middlebury dataset for our 7-layer model. We can identify three key steps during the training procedure. (A) shows the training of our \unarycnn using ML~\cref{subsec:training_unary}. In (B) we add the CRF with contrast-sensitive weights with an optimal choice of parameters $(\alpha,\beta,P_1,P_2)$. Finally, in (C) we jointly optimization the complete model~\cref{subsec:training_joint,sec:jointTraining}. Observe that the gap between training and validation errors is significantly smaller in (C).
%Note how enabling pairwise costs yields a large performance jump and how the joint training helps to reduce the train/validation gap.

\begin{figure}[ht]
\centering
\input{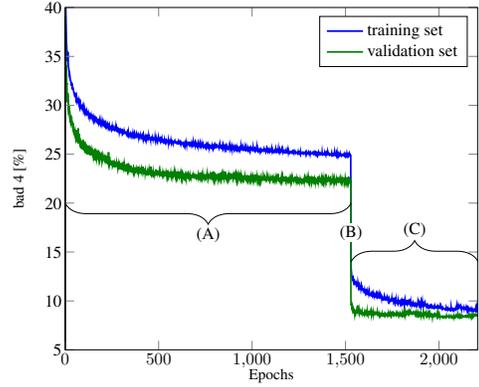}
\caption{%Training Curves: Visualization of how the bad1 measure (image size Q) evolves during training on Middlebury.
Performance \wrt the real objective 
%({\em bad4}, Middlebury) 
for key complexity steps of our model during training.
\label{fig:training-validation}
}
\end{figure}
%\begin{itemize}
%\item Derivative of corr-layer
%\end{itemize}
%
\section{Additional Experiments}\label{sec:suppl_exp}
%\begin{itemize}
%\item Comparison of exp-edges and learned edges (two nice images :))
%\item Influence of sub-label refinement
%\end{itemize}

\subsection{Timing}
%In this experiment we investigate
In~\cref{tbl:timing} we report
the runtime of individual components of our method for different image sizes and number of labels (=disparties). All experiments are carried out on a Linux PC with a Intel Core i7-5820K CPU with 3.30GHz and a NVidia GTX TitanX using CUDA 8.0. For Kitti 2015, the image size is $1242\times 375$. For Middlebury V3 we selected the {\em Jadeplant} data set with {\em half} resolution, leading to an image size of $1318\times 994$. We observe that with a constant number of layers in the Unary CNN and disparity range, the runtime depends linearly on the number of pixels in the input images. Correlation and CRF layer also depend on the number of estimated disparities, where we report numbers using 128 and 256 disparities.

\begin{table}[tb]
  \centering
\setlength{\tabcolsep}{3pt}
\small
\begin{tabular}{ll d{2} d{2} d{2}}
  \toprule
  \textbf{Component} & \textbf{\# Disp.}& \multicolumn{1}{c}{\textbf{Kitti 2015}} & \multicolumn{1}{c}{\textbf{Middlebury}}&\multicolumn{1}{c}{\textbf{Real-Time}}\\
   & &\multicolumn{1}{c}{\unit[0.4]{MP}} & \multicolumn{1}{c}{\unit[1.3]{MP}} & \multicolumn{1}{c}{\unit[0.3]{MP}}\\
  \midrule
  Input processing &     &   7.58 & 6.40    & 6.02 \\
  Pairwise CNN     &     &  21.12 & 59.46   & 13.75\\
  Unary CNN        &     & 262.48 & 664.19  & 62.54\\
  Correlation      & 128 & 154.86 & 437.02  & 46.70\\
  Correlation      & 256 & 286.87 & 802.86  & -\\
  CRF              & 128 & 309.48 & 883.57  & 155.85\\
  CRF              & 256 & 605.35 & 1739.34 & -\\
  \midrule
  \textbf{Total}   & 128 & 755.52  & 2050.64 & 284.86\\
  \textbf{Total}   & 256 & 1183.40 & 3272.25 & -\\
  \bottomrule
\end{tabular}
\caption{Timing experiments for 7 layer CNN and 5 CRF iterations (3 layer and 4 iterations for {\bf Real-Time}). Runtimes in ms.}
\label{tbl:timing}
\end{table}

\subsection{Sublabel Enhancement}
\begin{figure}[ht]
\centering
\includegraphics[width=\columnwidth]{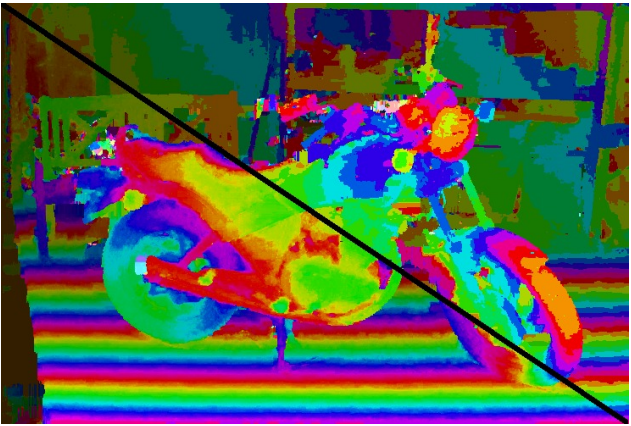}
\caption{Qualitative comparison on {\em Motorcycle} of discrete (upper-right) and sublabel enhanced (bottom-left) solution. Note how smooth the transitions are in the sublabel enhanced region (\eg at the floor or the rear wheel).}
\label{fig:middleburyRefinement}
\end{figure}

A drawback of our CRF method based on dynamic programming is the discrete nature of the solution. For some benchmarks like Middlebury the discretization artifacts negatively influence the quantitative performance. Therefore, most related stereo methods perform some kind of sub-label refinement (\eg~\cite{Luo2016,Zbontar2016}). For the submission to online benchmarks we deliberately chose to {\em discard} any form of non-trainable post-processing. However, we performed additional experiments with fitting a quadratic function to the output cost volume of the CRF method around the discrete solution. The refined disparity is then given by
\begin{equation}
d_{se} = d + \frac{C(d-h) - C(d+h)}{2 (C(d+h) - 2C(d) + C(d-h))}
\label{eq:sublabelEnhancement}
\end{equation}
where $C(d)$ is the cost of disparity $d$.
 A qualitative experiment on the {\em Motorcycle} image of Middlebury stereo can be seen in \cref{fig:middleburyRefinement}. Quantitative experiments have been conducted on both Kitti 2015 and Middlebury and will be reported in the follow sections (columns {\bf w. ref.} in \cref{tbl:all_mb,tbl:all_kitti_2015}). Again, in the main paper and in the submitted images we always report the performance of the {\em discrete} solution in order to keep the method pure.

% \subsection{Kitti 2012}
% In this section we report a complete overview of all tested variants of our proposed hybrid CNN-CRF on the stereo benchmark of KITTI 2012. We report the mean error (official error metric {\em percent of pixel with an error bigger 3 pixels}) on the complete training set.
%
% \begin{table}[tb]
%   \centering
% \setlength{\tabcolsep}{3pt}
% \small
% \begin{tabular}{lcccc}
% %\textbf{Method} &
% \toprule
% \multirow{2}{*}{\textbf{Method}} & \multicolumn{2}{c}{ \textbf{w/o. ref.}} & \multicolumn{2}{c}{\textbf{w. ref.}}\\
% &                 \textbf{all} & \textbf{non occ.} & \textbf{all} & \textbf{non occ.} \\
% \midrule
% CNN3 & 33.87 & 32.36 & - & - \\
% CNN3+CRF & 12.17 & 10.40 & 12.11 & 10.35 \\
% CNN3 Joint & 11.32 & 9.38& 11.28 & 9.34 \\
% CNN3 PW+Joint & 8.48 & 6.43 & 8.41 & 6.36 \\
% CNN7 & 16.96 & 15.05 & - & - \\
% CNN7+CRF & 5.93 & 4.36 & 5.87 & 4.30 \\
% CNN7 Joint & 6.35 & 4.33 & 6.31 & 4.28 \\
% CNN7 PW+Joint & \textbf{5.38} & 3.91 & \textbf{5.34} & 3.86 \\
% \cite{Zbontar2016}+CRF & 5.71 & \textbf{3.52} & 5.55 & \textbf{3.35} \\
% \cite{Luo2016}+CRF & 7.13 & 5.01 & 7.04 & 4.92 \\
% \cite{Zbontar2016} & 16.32 & 14.39 & - & - \\
% \cite{Luo2016} & 10.95 & 8.92 & - & - \\
% \bottomrule
% \end{tabular}
% \caption{Comparison of differently trained models and their performance on the {\em training} images of the KITTI 2012 stereo benchmark. The results are given in \% of pixels farther away than 3 disparities from the ground-truth on all pixels.}
% \label{tbl:all_kitti_2012}
% \end{table}

\subsection{Middlebury Stereo v3}
In this section we report a complete overview of all tested variants of our proposed hybrid CNN-CRF model on the stereo benchmark of Middlebury Stereo v3. We report the mean error (error metric {\em percent of non-occluded pixels with an error bigger 4 pixels}). All results are calculated on quarter resolution and upsampled to the original image size. We present the results in \cref{fig:mbQualitative,tbl:all_mb}. Note, how the quality increases when we add more parameters and therefore allow a more general model (visualized from left to right in \cref{fig:mbQualitative}. The last row shows the \textit{Vintage} image, where our model produces a rather high error. The reason for that lies in the (almost) completely untextured region in the top-left corner. Our full model is able to recover some disparities in this region, but not all. A very interesting byproduct visible in \cref{fig:mbQualitative} concerns our small 3-layer model. Visually, one can hardly see any difference to the deeper 7-layer model, when our models are full jointly trained. Hence, this small model is suited very well for a real-time application.
%We used our best performing model and reused parameters learned on Kitti and simply applied it to the Middlebury training set. The performance is inferior, \ie, the learned models do not generalize to Middlebury (see \cref{tbl:all_mb}). We made the same experiment in the other direction too as shown in \ref{ssec:supplkitti15}.
\par
Additionally, we compared to the performance of the model learned on Kitti, denoted Kitti-CNN in \cref{tbl:all_mb}. The performance is inferior, which means that the model trained on Kitti does not generalize well to Middlebury. Generalizing from Middlebury to Kitti, on the other hand is much better, as discussed in the next section.

\begin{figure*}[h!t]
  \includegraphics[width=0.5\textwidth]{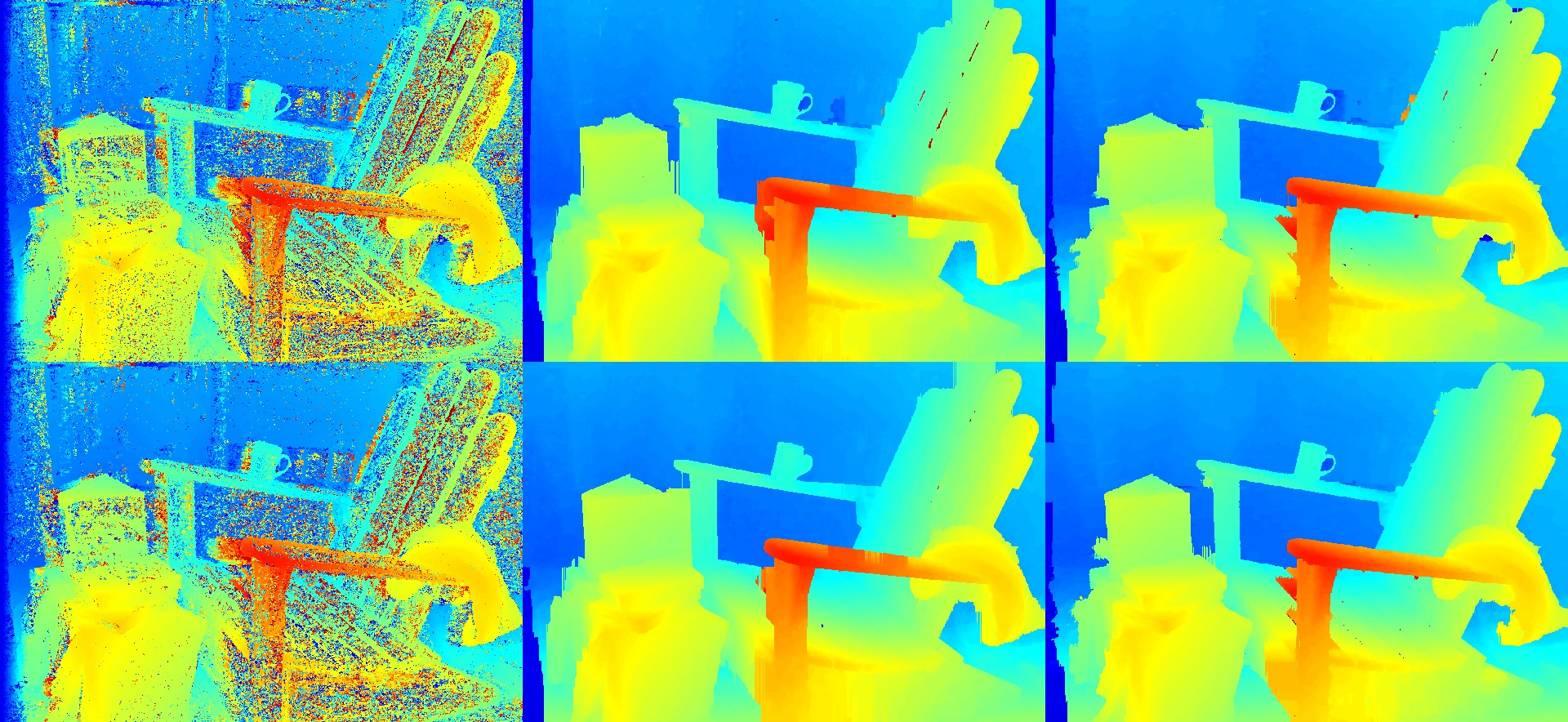}
  \includegraphics[width=0.5\textwidth]{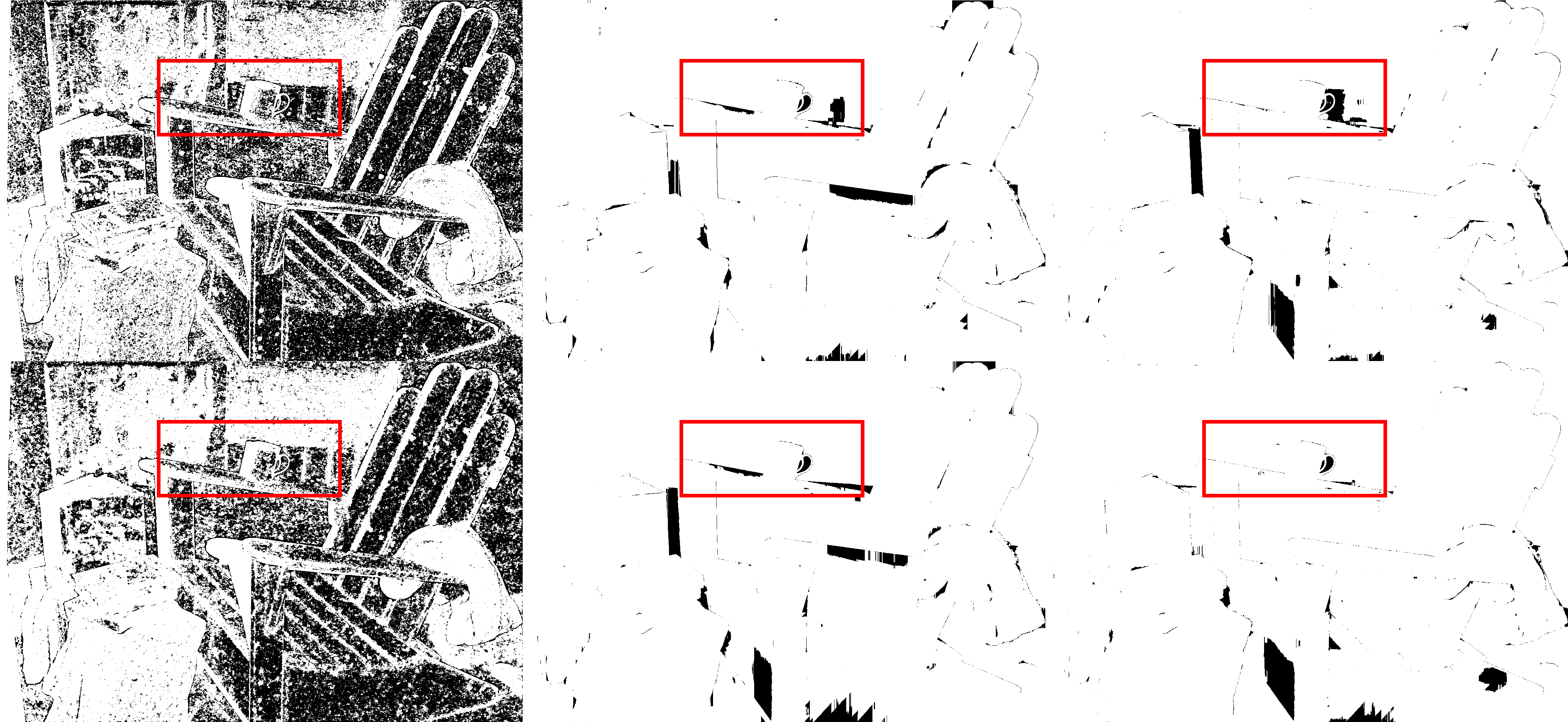}
  \includegraphics[width=0.5\textwidth]{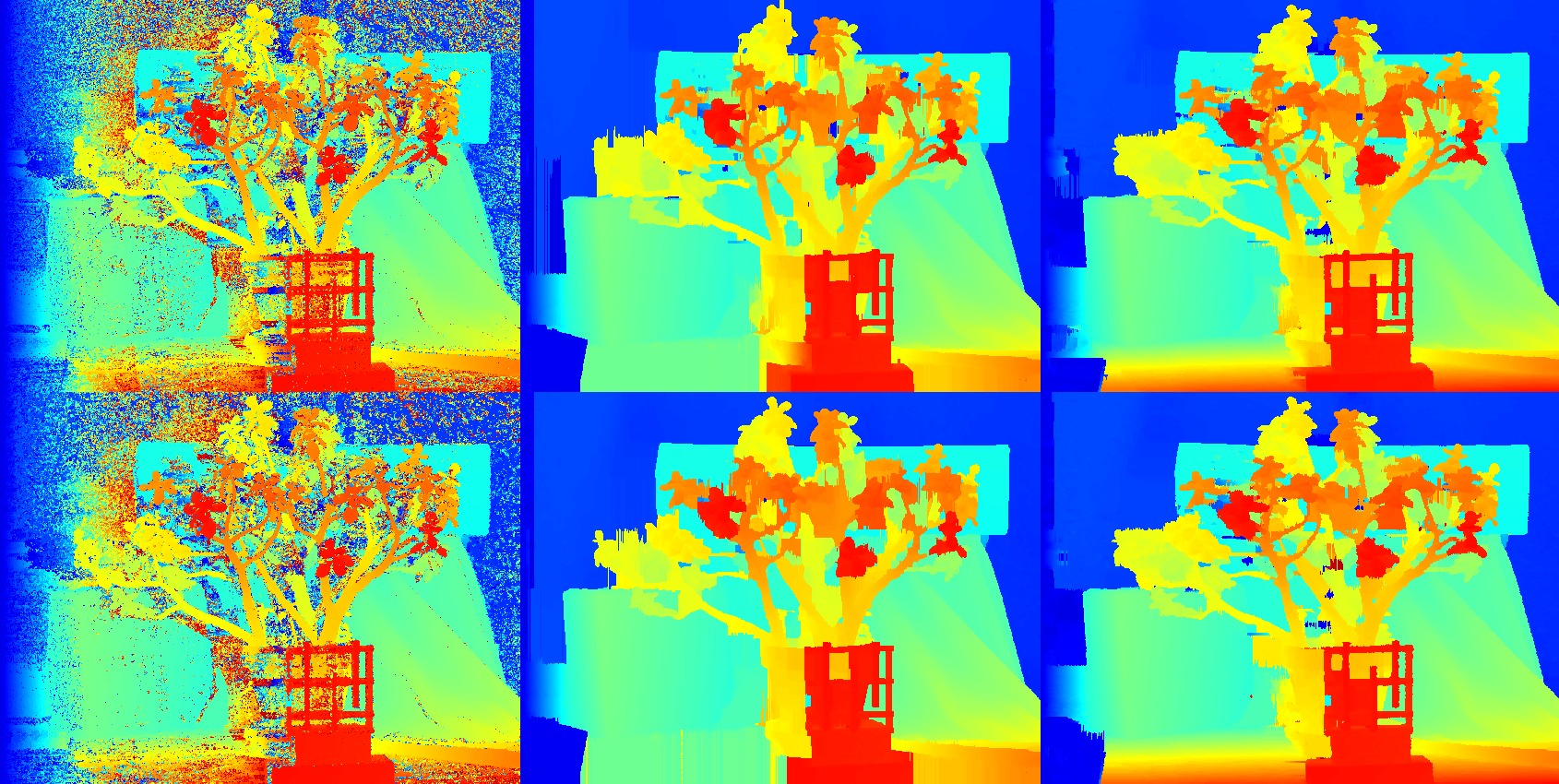}
  \includegraphics[width=0.5\textwidth]{/pretty_images/middlebury/Jadeplant_CRF_VS_FULL_bad4_box}
  \includegraphics[width=0.5\textwidth]{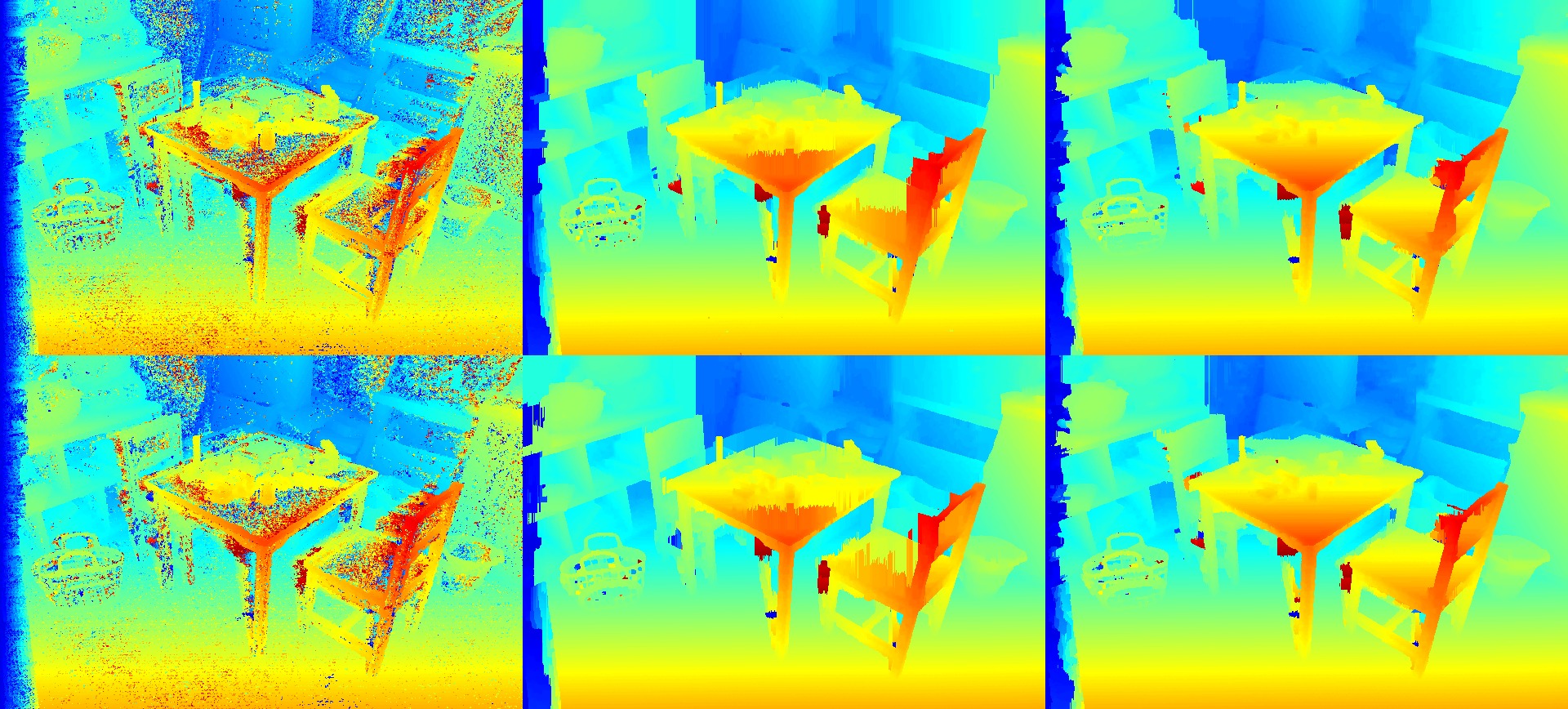}
  \includegraphics[width=0.5\textwidth]{/pretty_images/middlebury/PlaytableP_CRF_VS_FULL_bad4_box}
  \includegraphics[
  width=0.5\textwidth]{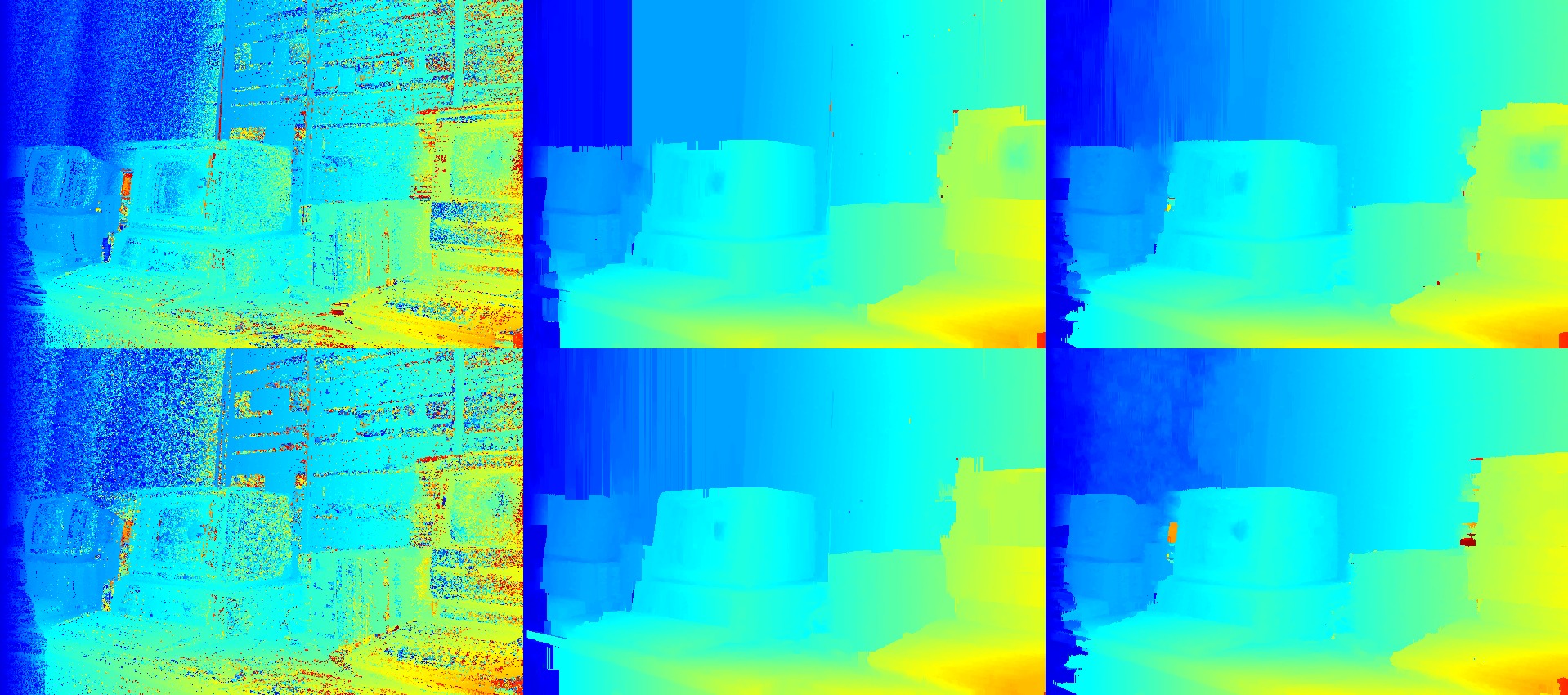}
  \includegraphics[width=0.5\textwidth]{/pretty_images/middlebury/Vintage_CRF_VS_FULL_bad4_box.png}
  \caption{Qualitative comparison of our models on Middlebury. For each image, the first row shows our 3-layer model and the second row shows the result of our 7-layer model. The first column shows out \unarycnn with $\argmax$ desicion rule, the second column \textit{CNNx+CRF} and the third column shows the result of \textit{CNNx+CRF+Joint+PW}. The remaining columns show the respective error-plots for the different models, where white indicates correct and black indicates wrong disparities. The red boxes highlight differences between our models. Disparity maps are color-coded from blue (small disparities) to red (large disparities). }
  \label{fig:mbQualitative}
\end{figure*}

\begin{table}[ht]
  \centering
\setlength{\tabcolsep}{3pt}
\small
\begin{tabular}{lcc}
%\textbf{Method} &
\toprule
\textbf{Method} & \textbf{w/o. ref.} & \textbf{w. ref.}\\
\midrule
CNN3 & 23.89 & - \\
CNN3+CRF & 11.18 & 10.50\\
CNN3 Joint & 9.48 & 8.75\\
CNN3 PW+Joint & 9.45 & 8.70\\
CNN7 & 18.58 & -\\
CNN7+CRF & 9.35 & 8.68\\
CNN7 Joint & 8.05 & 7.32 \\
CNN7 PW+Joint & 7.88 & 7.09\\
\midrule
Kitti-CNN & 15.22 & 14.43\\
\bottomrule
\end{tabular}
\caption{Comparison of differently trained models and their performance on the official training images of the Middlebury V3 stereo benchmark. The results are given in \% of pixels farther away than 4 disparities from the ground-truth on all pixels.}
\label{tbl:all_mb}
\end{table}

\subsection{Kitti 2015}
\label{ssec:supplkitti15}
In this section we report a complete overview of all tested variants of our proposed hybrid CNN-CRF model on the stereo benchmark of KITTI 2015. We report the mean error (official error metric {\em percent of pixel with an error bigger 3 pixels}) on the complete design set. \cref{tbl:all_kitti_2015} shows a performance overview of our models. In the last row of \cref{tbl:all_kitti_2015} we apply our best performing model on Middlebury to the Kitti design set. Interestingly, the performance decreases only by $\approx 1.5\%$ on all pixels. This experiment indicates, that our models generalize well to the scenes of the Kitti benchmark.

\begin{table}[ht]
  \centering
\setlength{\tabcolsep}{3pt}
\small
\begin{tabular}{lcccc}
%\textbf{Method} &
\toprule
\multirow{2}{*}{\textbf{Method}} & \multicolumn{2}{c}{ \textbf{w/o. ref.}} & \multicolumn{2}{c}{\textbf{w. ref.}}\\
&                 \textbf{all} & \textbf{non occ.} & \textbf{all} & \textbf{non occ.} \\
\midrule
CNN3 & 29.58 & 28.38 & - & - \\
CNN3+CRF & 7.88 & 6.33 & 7.77 & 6.22\\
CNN3 Joint & 7.66 & 6.11 & 7.57 & 6.02\\
CNN3 PW+Joint & 6.25 & 4.75 & 6.14 & 4.65\\
CNN7 & 14.55 & 13.08 & - & - \\
CNN7+CRF & 5.85 & 4.79 & 5.76 & 4.70\\
CNN7 Joint & 5.98 & 4.60 & 5.89 & 4.50\\
CNN7 PW+Joint & 5.25 & 4.04 & 5.18 & 3.96\\
\midrule
\cite{Zbontar2016}+CRF & 6.10 & 4.45 & 5.74 & 4.08\\
\cite{Luo2016}+CRF & 5.89 & 4.31 & 5.81 & 4.21\\
\cite{Zbontar2016} & 15.02 & 13.56 & - & -\\
\cite{Luo2016} & 7.54 & 5.99 & - & -\\
\midrule
MB-CNN & 6.82 & 5.35 & 6.69 & 5.21\\
\bottomrule
\end{tabular}
\caption{Comparison of differently trained models and their performance on the design set images of the KITTI 2015 stereo benchmark. The results are given in \% of pixels farther away than 3 disparities from the ground-truth on all pixels.}
\label{tbl:all_kitti_2015}
\end{table}

Due to lack of space in the main paper, we could only show a few qualitative results of the submitted method. In \cref{fig:kittiQualitativeMore} we show additional results, more of which can be viewed online.

Looking at Kitti results in more detail, we observe that most of the errors happen in either occluded regions or due to a fattened ground-truth. Since we train edge-weights to courage label-jumps at strong object boundaries, our model yields very sharp results. It is these sharp edges in our solution which introduce some errors on the benchmark, even when our prediction is correct. \cref{fig:kittiDetail} shows some examples on the \textit{test} set (provided by the online submission system).

\begin{figure*}[ht]
\centering
  \includegraphics[height=2.9cm]{../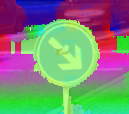}
  \includegraphics[height=2.9cm]{../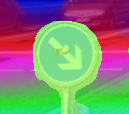}
  \includegraphics[height=2.9cm]{../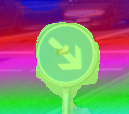}
  \includegraphics[height=2.9cm]{../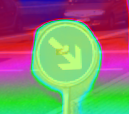} \\

  % errors
  %test1
  % \includegraphics[height=2.9cm]{../figures/kitti_comparison/000016_10_JMR_error.png}
  % \includegraphics[height=2.9cm]{../figures/kitti_comparison/000016_10_MC-CNN_error.png}
  % \includegraphics[height=2.9cm]{../figures/kitti_comparison/000016_10_ContentCNN_error.png}
  % \includegraphics[height=2.9cm]{../figures/kitti_comparison/000016_10_DispNetC_error.png}
  \includegraphics[height=2.9cm]{../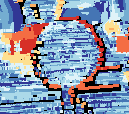}
  \includegraphics[height=2.9cm]{../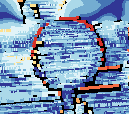}
  \includegraphics[height=2.9cm]{../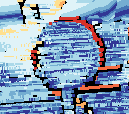}
  \includegraphics[height=2.9cm]{../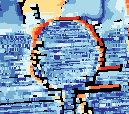} \\

  \includegraphics[height=2.97cm]{../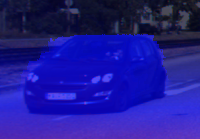}
  \includegraphics[height=2.97cm]{../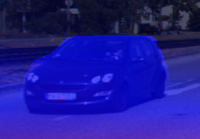}
  \includegraphics[height=2.97cm]{../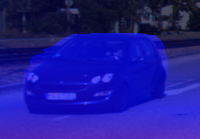}
  \includegraphics[height=2.97cm]{../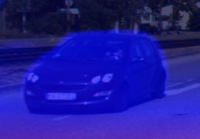} \\
  \includegraphics[height=2.97cm]{../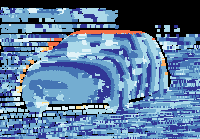}
  \includegraphics[height=2.97cm]{../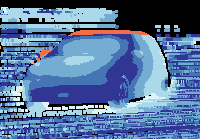}
  \includegraphics[height=2.97cm]{../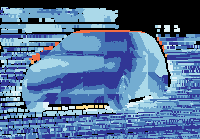}
  \includegraphics[height=2.97cm]{../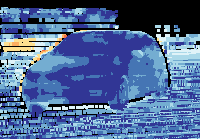}

  \caption{Error comparison on magnified parts of Kitti 2015 test images: The first and third row show the color-coded disparity map of \textit{Ours}, \textit{MC-CNN}, \textit{ContentCNN} and \textit{DispNetC}. The second and last row show the corresponding error-plots, where shades of blue mean correct and shades of orange mean wrong. Note, how our model accurately follows object boundaries, whereas all other approaches fatten the object. Nevertheless, in terms of correct or wrong we make more wrong predictions, because the ground-truth seems to be fattened as well.}
  \label{fig:kittiDetail}
\end{figure*}

% \begin{figure*}
% \centering
%   \includegraphics[height=2.5cm]{../}
%   \includegraphics[height=2.5cm]{../}
%   \includegraphics[height=2.5cm]{../}
%   \includegraphics[height=2.5cm]{../} \\
%   \includegraphics[height=2.5cm]{../}
%   \includegraphics[height=2.5cm]{../}
%   \includegraphics[height=2.5cm]{../}
%   \includegraphics[height=2.5cm]{../} \\
%
%   \includegraphics[height=2.5cm]{../figures/kitti_comparison/000019_10_JMR_disp_alpha.png}
%   \includegraphics[height=2.5cm]{../figures/kitti_comparison/000019_10_MC-CNN_disp_alpha.png}
%   \includegraphics[height=2.5cm]{../figures/kitti_comparison/000019_10_ContentCNN_disp_alpha.png}
%   \includegraphics[height=2.5cm]{../figures/kitti_comparison/000019_10_DispNetC_disp_alpha.png} \\
%   \includegraphics[height=2.5cm]{../figures/kitti_comparison/000019_10_JMR_error.png}
%   \includegraphics[height=2.5cm]{../figures/kitti_comparison/000019_10_MC-CNN_error.png}
%   \includegraphics[height=2.5cm]{../figures/kitti_comparison/000019_10_ContentCNN_error.png}
%   \includegraphics[height=2.5cm]{../figures/kitti_comparison/000019_10_DispNetC_error.png} \\
%
%   \caption{adsf}
%   \label{fig:kitti_test2}
% \end{figure*}

\begin{figure*}[ht]
\centering
\includegraphics[width=0.66\textwidth]{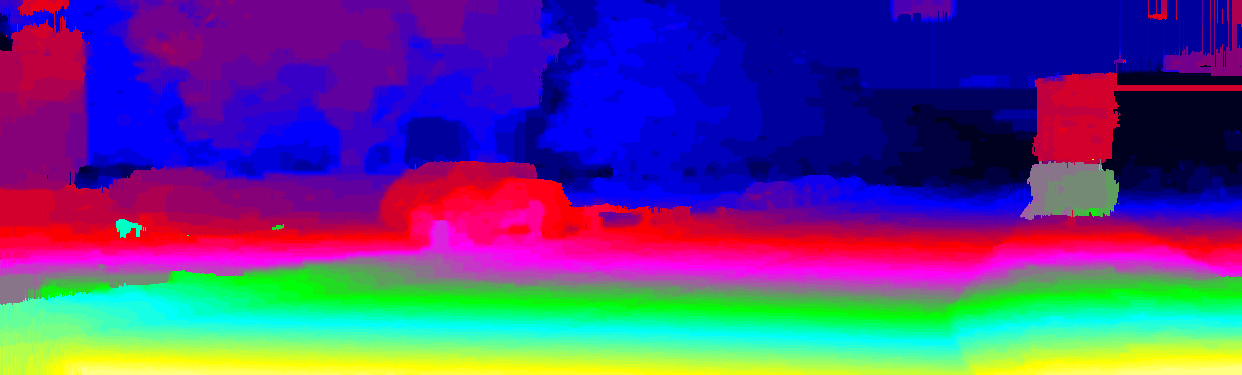}
\begin{minipage}[b][][c]{0.32\textwidth}
\includegraphics[width=\textwidth]{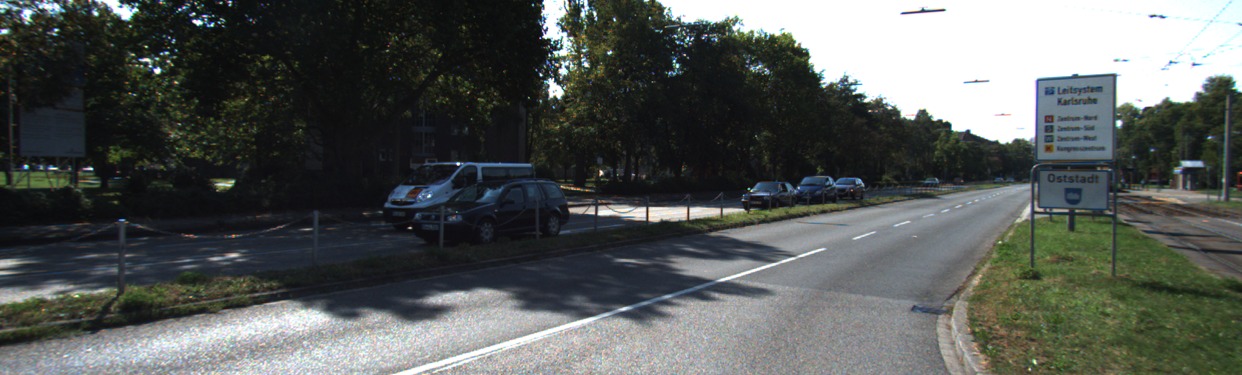}\\
\includegraphics[width=\textwidth]{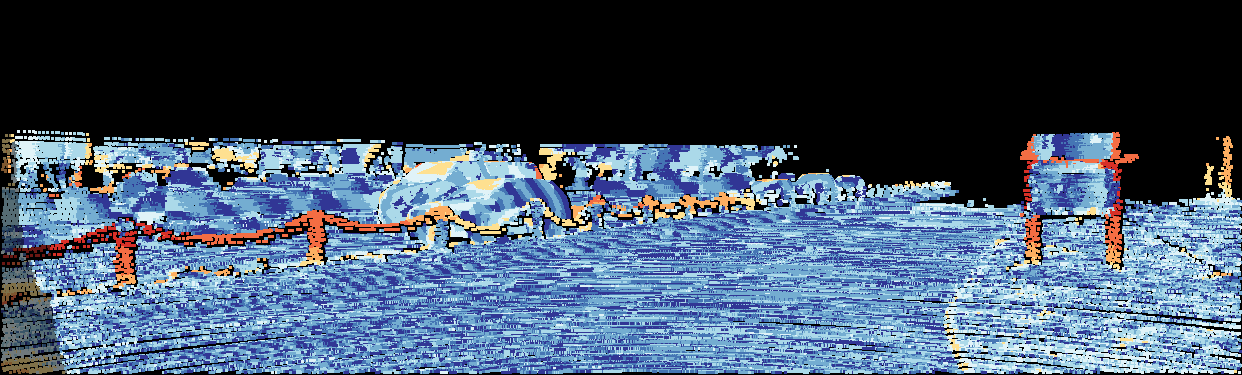}
\end{minipage}
\includegraphics[width=0.66\textwidth]{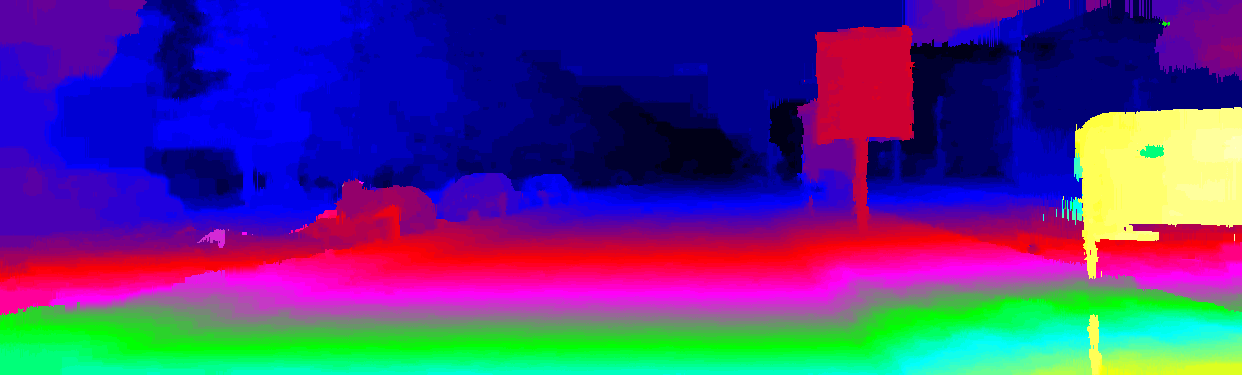}
\begin{minipage}[b][][c]{0.32\textwidth}
\includegraphics[width=\textwidth]{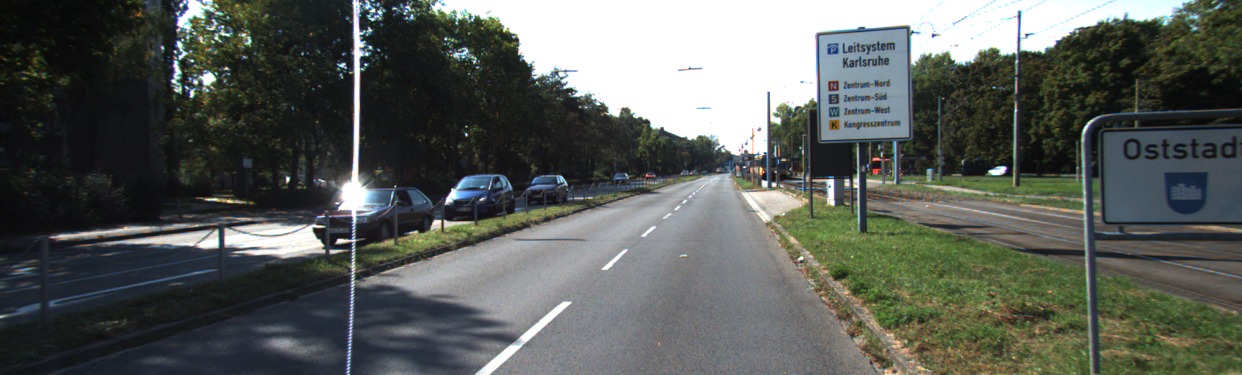}\\
\includegraphics[width=\textwidth]{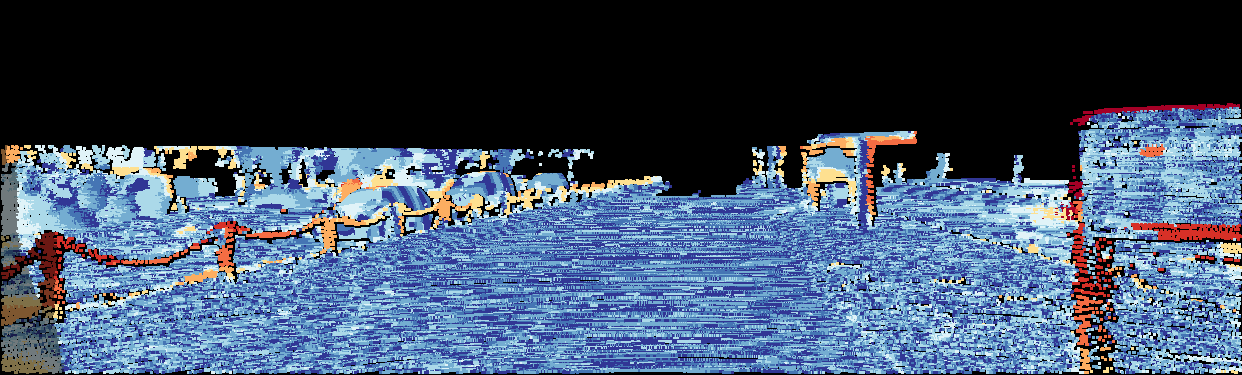}
\end{minipage}
\includegraphics[width=0.66\textwidth]{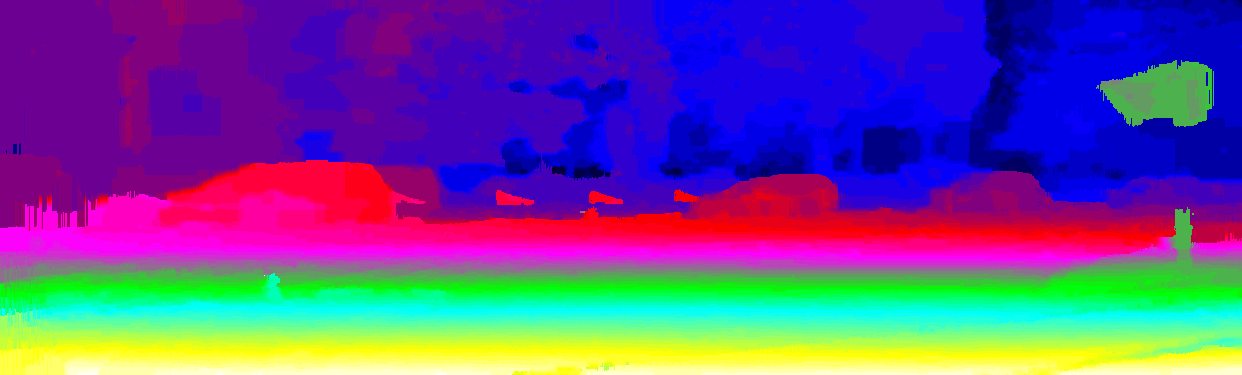}
\begin{minipage}[b][][c]{0.32\textwidth}
\includegraphics[width=\textwidth]{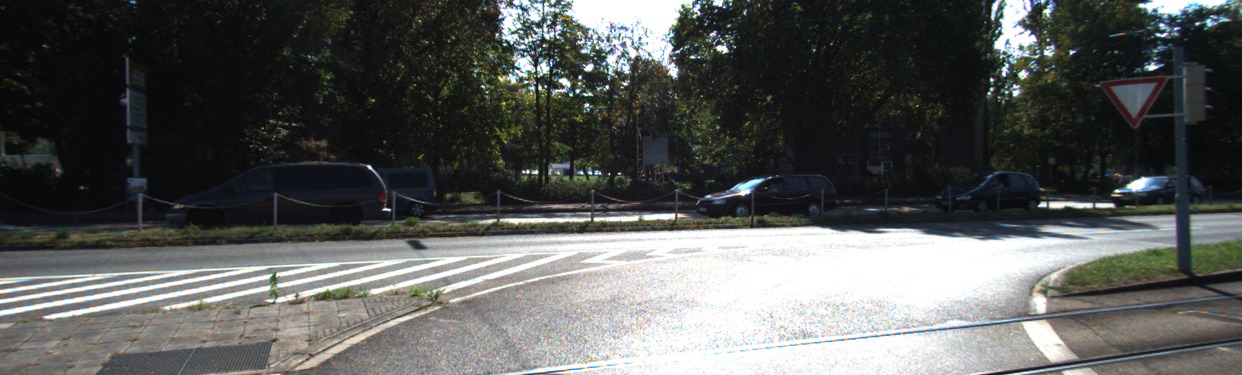}\\
\includegraphics[width=\textwidth]{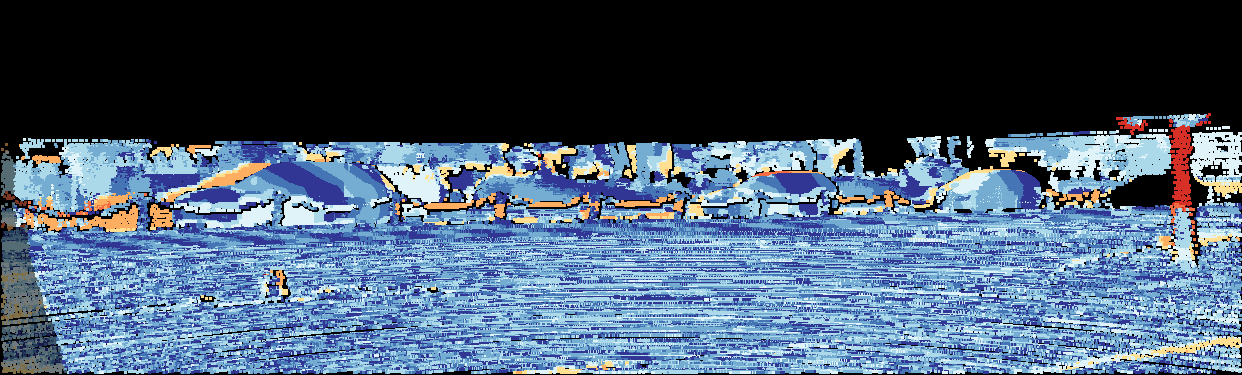}
\end{minipage}
\includegraphics[width=0.66\textwidth]{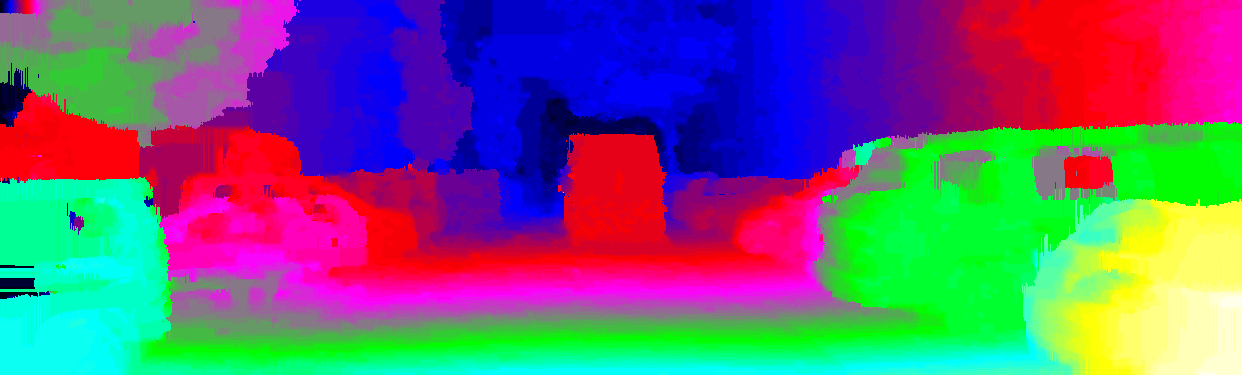}
\begin{minipage}[b][][c]{0.32\textwidth}
\includegraphics[width=\textwidth]{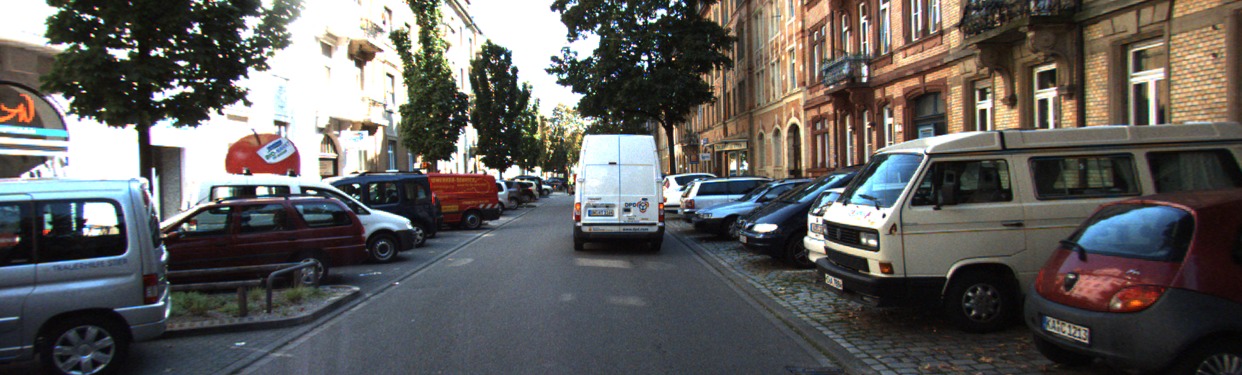}\\
\includegraphics[width=\textwidth]{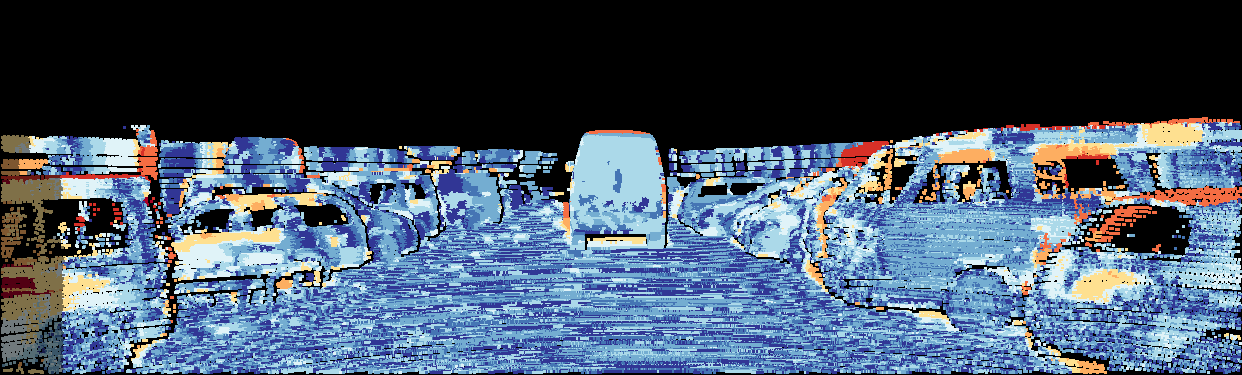}
\end{minipage}
\includegraphics[width=0.66\textwidth]{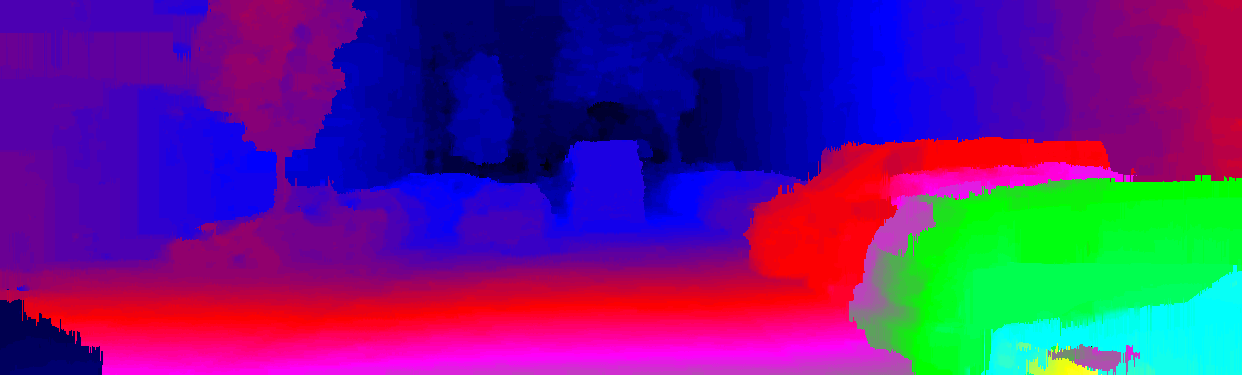}
\begin{minipage}[b][][c]{0.32\textwidth}
\includegraphics[width=\textwidth]{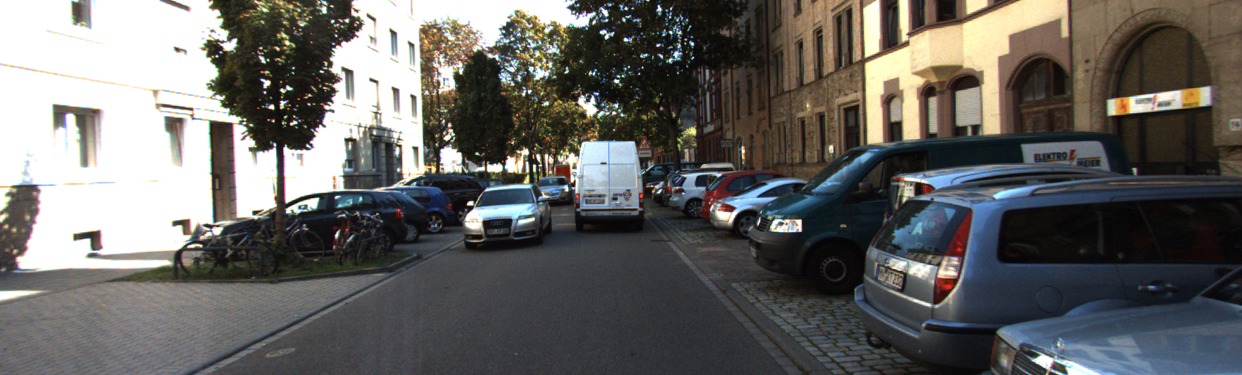}\\
\includegraphics[width=\textwidth]{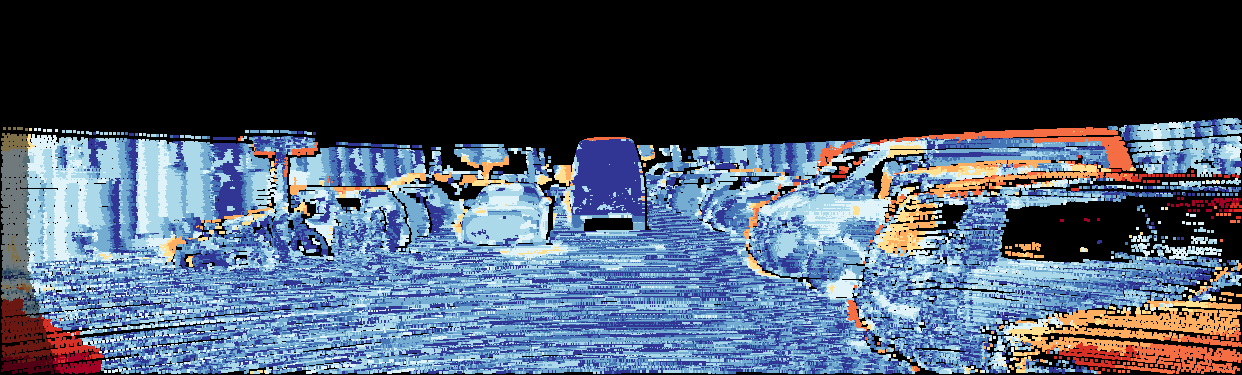}
\end{minipage}
\includegraphics[width=0.66\textwidth]{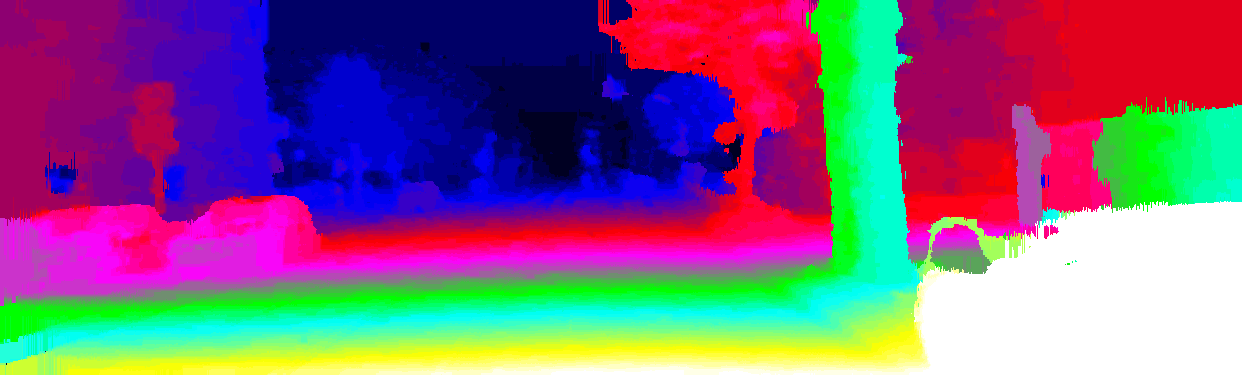}
\begin{minipage}[b][][c]{0.32\textwidth}
\includegraphics[width=\textwidth]{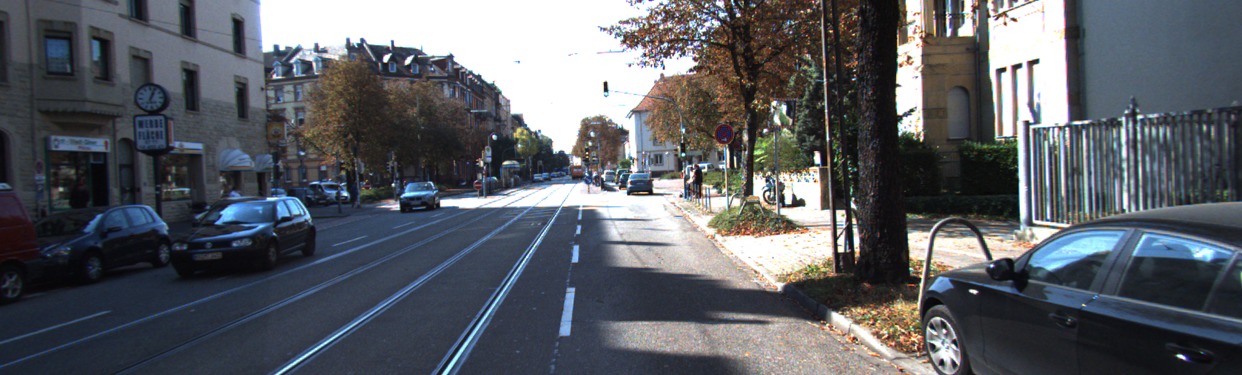}\\
\includegraphics[width=\textwidth]{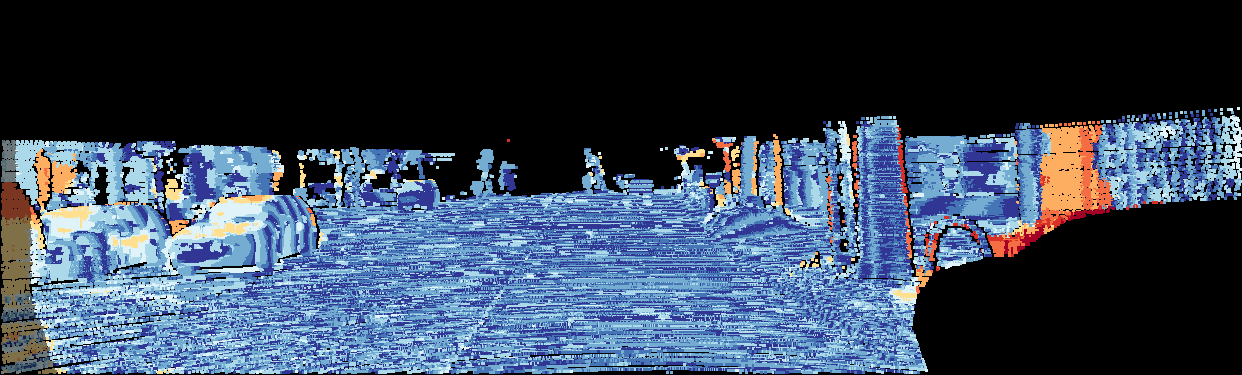}
\end{minipage}
\caption{Qualitative comparison on the test set of KITTI 2015.}
\label{fig:kittiQualitativeMore}
\end{figure*}
%\begin{figure}
%  \includegraphics{figures/pretty_images/kitti/weights/*.}
%  \caption{}
%  \label{}
%\end{figure}

% \input{../figures/plots/bad1K15_7.tex}

% \subsection{Visual Comparison of our Models vs Zbontar vs Luo}
%
% \subsection{Trained Middlebury Parameters applied at Kitti}

\end{document}